
\documentclass[journal]{IEEEtran}





\usepackage{graphics} 
\usepackage{epsfig} 
\usepackage{mathptmx} 
\usepackage{times} 
\usepackage{amsmath} 
\usepackage{amssymb}  
\usepackage{cite}
\usepackage{bm}
\usepackage{color}
\usepackage{arcs}
\usepackage{algorithm}               
\usepackage{algorithmic}             

\newtheorem{defn}{Definition}
\newtheorem{rem}{Remark}
\newtheorem{thm}{\textbf{Theorem}}

\newcommand*{\QEDA}{\hfill\ensuremath{\square}}

\hyphenation{op-tical net-works semi-conduc-tor}

\begin{document}

\title{Real-time Acceleration-continuous Path-constrained Trajectory Planning With Built-in Tradability Between Cruise and Time-optimal Motions }

\author{Peiyao Shen, Xuebo Zhang, and Yongchun Fang
\thanks{This work is supported in part by National Natural Science Foundation of China under Grants 61573195 and U1613210. (\emph{Corresponding author: Xuebo Zhang})}
\thanks{The authors are with the Institute of Robotics and Automatic Information System, Nankai University, Tianjin 300071, China,
and also with the Tianjin Key Laboratory of Intelligent Robotics, Nankai University, Tianjin 300071, China.
        {\tt\small (e-mail: zhangxuebo@nankai.edu.cn)}}%
}

\markboth{~}%
{Shell \MakeLowercase{\textit{et al.}}: Bare Demo of IEEEtran.cls for IEEE Journals}

\maketitle

\begin{abstract}
In this paper, a novel real-time acceleration-continuous path-constrained trajectory planning algorithm is proposed with an appealing built-in tradability mechanism between cruise motion and time-optimal motion. Different from existing approaches, the proposed approach smoothens time-optimal trajectories with bang-bang input structures to generate acceleration-continuous trajectories while preserving the completeness property. More importantly, a novel built-in tradability mechanism is proposed and embedded into the trajectory planning framework, so that the proportion of the cruise motion and time-optimal motion can be flexibly adjusted by changing a user-specified functional parameter. Thus, the user can easily apply the trajectory planning algorithm for various tasks with different requirements on motion efficiency and cruise proportion. Moreover, it is shown that feasible trajectories are computed more quickly than optimal trajectories. Rigorous mathematical analysis and proofs are provided for these aforementioned results. Comparative simulation and experimental results on omnidirectional wheeled mobile robots demonstrate the capability of the proposed algorithm in terms of flexible tunning between cruise and time-optimal motions, as well as higher computational efficiency.
\end{abstract}

\begin{IEEEkeywords}
Trajectory planning, Time optimality, Cruise motion, Tradability mechanism
\end{IEEEkeywords}

\IEEEpeerreviewmaketitle

\section{Introduction}

\IEEEPARstart{M}{otion} planning aims to generate an effective collision-free motion law under kinematic and dynamic constraints of robotic systems \cite{2006_Mech}, which can be decoupled into path planning and path-constrained trajectory planning. Geometric paths are generated by path planning under constraints of curvature, collision avoidance and so on \cite{2010_TRO}. Path-constrained trajectory planning algorithms take the generated paths, kinematic and dynamic constraints as input arguments to output feasible velocity profiles along given paths. In this paper, we focus on path-constrained trajectory planning while simultaneously considering cruise motion proportion, time-optimal motion, as well as the algorithmic completeness and its real-time performance.

\begin{figure}[t]
      \centering
      \includegraphics[scale=0.5]{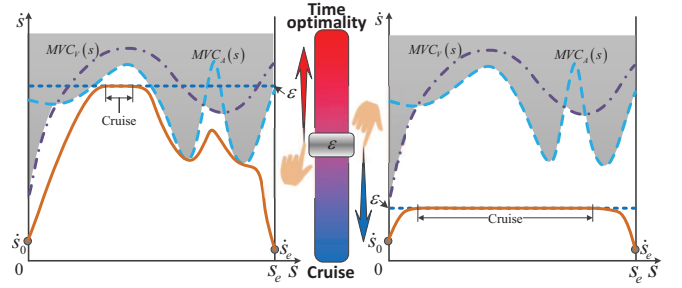}
      \caption{It is a diagram of the proposed algorithm with a built-in tradability mechanism between cruise motion and time-optimal motion. The notations $s,\dot s, \dot s_0, \dot s_e$ represent the path coordinate, path velocity, starting velocity and terminal velocity, respectively. The cyan dash curve $MVC_A(s)$ and purple dash-dot curve $MVC_V(s)$ decide the maximum velocity curve. The orange solid curves represent feasible trajectories generated by the proposed algorithm. In the tradability mechanism, a user-specified functional parameter $\epsilon$ corresponding to the gray-white button decides the adjustable velocity limit line (blue small dash lines). When the button slides to `Time-optimality' (the parameter increasing), the traveling time and cruise proportion of the generated trajectories decrease. When the button slides to `Cruise' (the parameter decreasing), the traveling time and cruise proportion of the generated trajectories increase.}
      \label{fig_intro}
\end{figure}

In order to increase the production efficiency of robotic systems, many trajectory planning techniques design the traveling time as the objective function to generate time-optimal trajectories \cite{1985_TAC,1985_IJRR}. However, their input structures are generally bang-bang, resulting in discontinuous acceleration. Thus, the time-optimal trajectory planning indicates input saturation of at least one actuator at any time instant, such that there is no space for the controller to correct tracking errors caused by disturbances or modeling errors \cite{2006_Mech}. Accordingly, smooth trajectory planning techniques are given to improve tracking accuracy \cite{2013_TRO,2000_TIE,1994_ICRA,2011_JIRS,2014_JFR}. In general, feasible trajectories are expressed with piecewise polynomial interpolation to guarantee smoothness (such as continuous acceleration), and then existing optimization solvers are employed to compute optimal trajectories in polynomial parametric space. However, it is hard to obtain optimal trajectories in global space other than parametric space \cite{2003_TRA,2016_JCAS}. Moreover, most of smooth techniques are achieved in the offline manner due to the nonlinear optimization solvers. The piecewise polynomial trajectories mainly consist of acceleration and deceleration motions, thus it is difficult to possess a high proportion of cruise motion \cite{2017_RCIM}. In this case, the robot is in `go fast or go slow' motion, which may bring potential accidents for urban transport systems \cite{2016_TITS}. In addition, these aforementioned literatures take no notice of an appealing planning property `completeness': a solution is returned if a planning problem is solvable otherwise a failure is returned in finite time. Thus, there is still a lack of a real-time acceleration-continuous path-constrained trajectory planning considering the cruise motion, time-optimality and completeness.

In this paper, we propose a novel real-time acceleration-continuous trajectory planning algorithm with a built-in tradability mechanism between cruise motion and time-optimal motion. Moreover, the proposed algorithm still preserves the completeness property after smoothing the time-optimal trajectories computed by the work \cite{2017_Mech}. By adjusting a user-specified functional parameter in the built-in mechanism, the proposed algorithm outputs an adjustable trajectory between smoothened time-optimal trajectories and feasible trajectories possessing a high proportion of cruise motion. Specifically, in the mechanism, the traveling time and cruise proportion of feasible trajectories generated by the proposed algorithm are inversely in proportional to the functional parameter. In addition, the computational time of the proposed algorithm is in proportional to the functional parameter.


The main procedure of the proposed algorithm is described as follows. First, along the given path, the kinematic and dynamic constraints of the robotic system are transformed into path acceleration constraints. Then, a maximum velocity curve (MVC) is computed to satisfy the path acceleration constraints. With the aid of complete numerical integration techniques \cite{2017_Mech}, a time-optimal trajectory is obtained as a combination of maximum accelerating and maximum decelerating curves under the MVC. However, the path acceleration of the time-optimal trajectory is discontinuous at the intersection points. Thus, a bidirectional integration operation is newly designed in this paper to guarantee continuous path acceleration at intersection points. Specifically, two feasible velocity profiles are computed with continuous path acceleration. In this paper, it is proven that these two velocity profiles joint together and constitute one feasible trajectory with continuous path acceleration. Finally, a smoothened time-optimal trajectory under the maximum velocity curve is generated by executing the bidirectional integration operation at each intersection point.
On this basis, a novel tradability mechanism is proposed as shown in Fig. \ref{fig_intro}. In order to adjust the traveling time and cruise proportion of generated trajectories under the MVC, a user-specified functional parameter is formulated as a straight line of velocity limit which reconstructs the MVC with minimum operation. As the parameter decreases, the path velocity of the reconstructed MVC decreases, while the constant velocity parts in the reconstructed MVC increase. Thus, the traveling time and cruise proportion of the generated trajectories increase. In addition, the constant velocity parts in the reconstructed MVC can be computed in $O(1)$ time complexity. Thus, feasible trajectories with a high proportion of cruise motion need less computational time. The corresponding proofs are provided in this paper. Comparative simulation and experimental results on active-caster-based omnidirectional wheeled mobile robots (OWMR) verify the effectiveness and capability  of the proposed algorithm.

The \textbf{\emph{main contributions}} of this paper are summarized as follows:

\begin{enumerate}
  \item  Under acceleration-continuous constraints, the proposed algorithm smoothens time-optimal trajectories to guarantee continuous acceleration by executing the bidirectional integration operation instead of polynomial interpolation. Accordingly, an appealing planning property `completeness' is preserved in the proposed algorithm.
  \item  A novel built-in tradability mechanism between cruise motion and time-optimal motion is proposed. As the value of a user-specified parameter increases, a faster smoothened trajectory is obtained to improve motion efficiency of robotic systems. As the value of this parameter decreases, a slower but feasible trajectory possessing a higher proportion of cruise motion is obtained to make robot move steadily and improve tracking accuracy.
  \item  The computational time of the proposed algorithm possesses upper and lower limits. The computational time of generating smoothened time-optimal trajectories reaches the upper limit which is the same with the online technique \cite{2017_Mech}, while the generation of feasible trajectories with a high proportion of cruise motion requires less computational time. It conforms to the intuition: feasible trajectories are generated faster than optimal trajectories.
\end{enumerate}

The remainder of this paper is divided into five sections. Section II summarizes related path-constrained trajectory planning algorithms. Section III describes the detailed procedures of the proposed algorithm. Section IV provides mathematical proofs for appealing properties of the proposed algorithm. In Section V, comparative simulation and experimental results on omnidirectional wheeled mobile robots demonstrate the validity of the proposed algorithm. Finally, Section VI gives some conclusions.

\section{Related Works}


\subsection{Time-Optimal Trajectory Planning}

Time-optimal trajectory planning attracts significant research attention because they increase production efficiency of robotic systems and bring great commercial profit. Based on the Pontryagin Maximum Principle, the proposed methods in \cite{1985_TAC,1985_IJRR} generate time-optimal trajectories with bang-bang torque inputs of industrial manipulators. The work in \cite{1999_TRA} applies the method \cite{1985_IJRR} to output time-optimal trajectories under both acceleration and velocity constraints for wheeled mobile robots. A fast and open source code implementation of the method \cite{1985_TAC} is provided in \cite{2014_TRO}. Recently, under both torque and velocity constraints, the work in \cite{2017_Mech} proposes a provably complete and time-optimal path-constrained trajectory planning algorithm \cite{2017_Mech}. Alternative time-optimal trajectory planning methods are proposed with dynamic programming techniques in \cite{1986_TAC,1987_JDSMC}. Convex optimization techniques are also applied to the generation of time-optimal trajectories \cite{2009_TAC}. Although the reference trajectories generated by aforementioned methods are time-optimal, the input of robotic systems is a bang-bang structure and the path acceleration along the given path is discontinuous. It results in two undesired effects. First, the planned trajectory with discontinuous acceleration is difficult to be followed, and the planned trajectory requires the saturation of at least one actuator in the motion, so that the tracking accuracy is greatly reduced especially in the presence of external uncertainties such as friction. Second, following unsmooth reference trajectories may cause safety risks and decrease the life of actuators.

\subsection{Smooth Trajectory Planning}

In order to guarantee the continuous path acceleration, most works use piecewise polynomial interpolation to express feasible trajectories \cite{1988_ICRA}, and then they compute the optimal solutions with existing sequential quadratic programming (SQP) \cite{2006_Mech,2013_RCIM}, flexible tolerance method (FTM) \cite{2000_JRS}, particle swarm optimization (PSO) \cite{2017_RCIM} and active-set \cite{2017_ICRA} optimization solvers. With a preset traveling time, the works in \cite{2000_TIE,1997_ICRA} describe smooth trajectories as cubic splines and minimize the time integral of the squared jerk along the given path. The feasibility issues in the planning problem have been discussed in \cite{2013_TRO,2007_ICMA,2005_IROS,2006_IFAC,2015_CASE}. However, these trajectory planning methods need to set the predefined traveling time, which limits their application scopes. In addition, with cubic splines and quintic polynomials, the works in \cite{2003_TRA,2000_JRS} minimize an objective function of the traveling time with enforced constraints of bounded jerk to guarantee smoothness. The work in \cite{2017_RCIM} first computes a time-optimal trajectory expressed by piecewise cubic splines, and then fixes knots among the cubic splines with the 7th order polynomial. In the works \cite{2006_Mech,2011_JIRS}, the objective function consists of two terms: the traveling time and integral of the square jerk along the given path. By adjusting the weight of two terms, smoother or faster trajectories can be generated. To summarize, for aforementioned planning techniques, there are still some unsolved issues as follows:

\begin{enumerate}
  \item The time optimality cannot be guaranteed in global space due to piecewise polynomial interpolation.
  \item The computational time of trajectory generation is not adjustable, and the real-time performance is usually difficult to be guaranteed due to the nonlinear optimization.
  \item An important algorithmic property `completeness' is ignored, which indicates that a planning algorithm returns feasible solutions for solvable planning problems otherwise a failure is returned in finite time. The detailed statements of the algorithmic completeness is also discussed in our previous works \cite{2017_Mech}.
\end{enumerate}

From the above analysis, there is still a lack of a real-time acceleration-continuous path-constrained trajectory planning, which possesses an effective tradability mechanism between cruise motion and time-optimal motion. It will be provided in this paper, and the corresponding appealing properties and proofs are also given in following sections.

\section{The Proposed Algorithm}

This section provides the procedures of the proposed algorithm which generates smoothened time-optimal trajectories, as well as feasible trajectories possessing a high proportion of cruise motion.

\subsection{Path Parameterization}

Along a given path, the trajectory planning problem is transformed from a high-dimensional space into a two-dimensional space consisting of path coordinate and path velocity. For instance, the dynamically extended model of a first-order nonlinear system (robot manipulators \cite{2016_ACC}, unicycle robots \cite{2016_TCST}, car-like robots \cite{2001_TRA}, tractor-trailer robots \cite{2014_Mech}, OWMR \cite{2015_Cyber}) is described as follows:
\begin{align}
\label{eq_v}
\bm v &= J(\bm q)\bm{\dot q},  \\
\label{eq_a}
\bm a & = \bm{\dot v},
\end{align}
where the vector $\bm q\in \mathbb{R}^n$ is the state of the robotic system, and the vectors $\bm v, \bm a\in\mathbb{R}^m$ are the velocity and acceleration of actuators, respectively. The matrix $J(\bm q)\in \mathbb{R}^{m\times n}$ is a function of $\bm q$ denoting the Jacobian matrix.

Along a specified path, the state of the robotic system is represented as $\bm q(s)$ with the scalar $s$ being path coordinate. Accordingly, the equations (\ref{eq_v}) and (\ref{eq_a}) are rewritten as
\begin{align}
\label{eq_v_s}
\bm v &= J(\bm q(s))\bm{q}_s \dot s,  \\
\label{eq_a_s}
\bm a & = J(\bm q(s))\bm q_s\ddot s + (J_s\bm q_s+J(\bm q(s))\bm q_{ss})\dot{s}^2,
\end{align}
wherein $J_s = \partial J(\bm q(s))/\partial s$, $\bm q_s=\partial \bm q(s)/\partial s$ and $\bm q_{ss} = \partial^2 \bm q(s) / \partial s^2$. The scalars $\dot s$ and $\ddot s$ are the path velocity and path acceleration, respectively.

The following inequalities represent velocity and acceleration constraints of actuators of the robotic system,
\begin{align}
\label{eq_v_cons}
-\bm v_{max} \leq \bm v \leq \bm v_{max},  \\
\label{eq_a_cons}
-\bm a_{max} \leq \bm a \leq \bm a_{max},
\end{align}
where $\bm v_{max}\in\mathbb{R}^m$ and $\bm a_{max}\in\mathbb{R}^m$ are constant vectors representing velocity and acceleration bounds, respectively.

In order to guarantee velocity constraints (\ref{eq_v_cons}), substituting (\ref{eq_v_s}) into (\ref{eq_v_cons}) yields that
\begin{align}\label{eq_ad}
\bm A(s) \dot s + \bm D(s) \leq \bm 0,
\end{align}
with
\begin{align}
\label{eq_A}
\bm A(s) &= [(J(\bm q(s))\bm{q}_s)^\mathrm{T} \ -(J(\bm q(s))\bm{q}_s)^\mathrm{T}]^\mathrm{T}, \\
\label{eq_D}
\bm D(s) &= [-\bm v^\mathrm{T}_{max} \ -\bm v^\mathrm{T}_{max}]^\mathrm{T}.
\end{align}

In order to guarantee acceleration constraints (\ref{eq_a_cons}), substituting (\ref{eq_a_s}) into (\ref{eq_a_cons}) yields that
\begin{align}\label{eq_abc}
\bm A(s) \ddot s + \bm B(s) \dot{s}^2 + \bm C(s) \leq \bm 0,
\end{align}
with
\begin{align}
\label{eq_B}
\bm B(s) &= [(J_s\bm q_s+J(\bm q(s))\bm q_{ss})^\mathrm{T} \ -(J_s\bm q_s+J(\bm q(s))\bm q_{ss})^\mathrm{T}]^\mathrm{T}, \\
\label{eq_C}
\bm C(s) &= [-\bm a^\mathrm{T}_{max} \ -\bm a^\mathrm{T}_{max}]^\mathrm{T}.
\end{align}

With the aid of the inequality (\ref{eq_abc}), path acceleration constraints are computed as
\begin{align}
\label{eq_alp_bet}
\alpha(s,\dot s) \leq \ddot s \leq \beta(s,\dot s),
\end{align}
where lower and upper bounds are described as
\begin{align}
\alpha(s,\dot s) &= \mathrm{max}\left\{\alpha_i|\alpha_i = \dfrac{-B_i(s)\dot s^2 - C_i(s)}{A_i(s)}, A_i(s)<0 \right\}, \label{eq_alp} \\
\beta(s,\dot s) &= \mathrm{min}\left\{\beta_i|\beta_i = \dfrac{-B_i(s)\dot s^2 - C_i(s)}{A_i(s)}, A_i(s)>0 \right\}, \label{eq_bet}
\end{align}
with the scalars $A_i(s), B_i(s), C_i(s)$ are elements of vectors $\bm A(s),\bm B(s),\bm C(s)$, respectively.

From above analysis, the model and constraints of the robotic system are transformed from a $n$-dimensional state space into a two-dimensional $(s,\dot s)$ space.

\begin{figure}[t]
      \centering
      \includegraphics[scale=0.7]{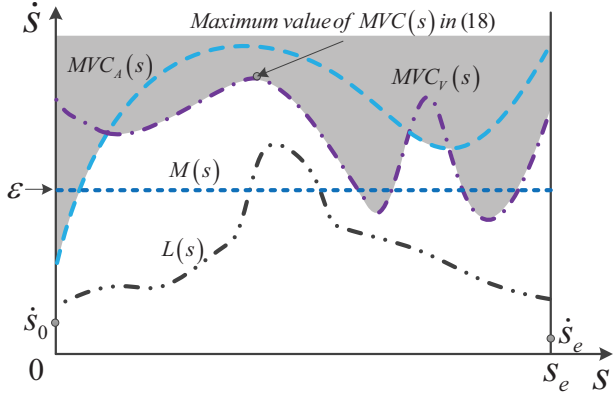}
      \caption{It is a diagram of velocity limit curves. The cyan dash curve represents the velocity limit curve $MVC_A(s)$ satisfying acceleration constraints (\ref{eq_a_cons}). The purple dash-dot curve represents the velocity limit curve $MVC_V(s)$ satisfying velocity constraints (\ref{eq_v_cons}). The blue small dash line stands for the velocity limit curve decided by $\varepsilon$. The black dash-dot-dot curve represents a constant velocity boundary on the plane $(s,\dot s)$. The symbols $\dot s_0, \dot s_e, s_e$ are the starting path velocity, terminal path velocity and total length of a specified path.}
      \label{fig_limit_curve}
\end{figure}

\subsection{Velocity Limit Curve}

In this subsection, velocity and acceleration constraints of robotic systems are represented as velocity limit curves on the phase plane $(s,\dot s)$. A user-specified functional parameter is also introduced as a user-prone constant velocity limit curve to adjust the traveling time, cruise proportion and computational time of feasible trajectories.

According to (\ref{eq_ad}), a velocity limit curve satisfying velocity constraints (\ref{eq_v_cons}) is computed as
\begin{equation} \label{eq_mvc_v}
MVC_V(s)=\mathrm{min}\left\{\dot s \geq 0 | -\dfrac{D_i(s)}{A_i(s)}, A_i(s) > 0 \right\},   s\in[0,s_e],
\end{equation}
where the scalar $s_e$ is the total length of the given path and $D_i(s)$ is the element of $\bm D(s)$. For example, $MVC_V(s)$ is represented as the dash-dot curve in Fig. \ref{fig_limit_curve}. When the path velocity of the robotic system reaches $MVC_V(s)$, there exists at least one saturated actuator velocity.

With the aid of the equation (\ref{eq_alp_bet}), the velocity limit curve satisfying acceleration constraints (\ref{eq_a_cons}) is computed as
\begin{equation} \label{eq_mvc_a}
MVC_A(s)=\mathrm{min}\{\dot s \geq 0 \mid \alpha(s,\dot s) = \beta(s, \dot s) \}, \  s\in[0,s_e].
\end{equation}
In Fig. \ref{fig_limit_curve}, $MVC_A(s)$ is represented as a dash cyan curve, where the equation $\alpha(s,\dot s)=\beta(s,\dot s)$ holds. When the path velocity of the robotic system  reaches $MVC_A(s)$, there exists at least one actuator achieving acceleration saturation.

In order to satisfy velocity and acceleration constraints, the maximum velocity curve is obtained as
\begin{equation} \label{eq_mvc_a}
MVC(s)=\mathrm{min}(MVC_A(s), MVC_V(s)), \  s\in[0,s_e].
\end{equation}
It is represented as a boundary between gray and non-gray regions in Fig. \ref{fig_limit_curve}. Constraints (\ref{eq_v_cons}) and (\ref{eq_a_cons}) are satisfied in the non-gray region.

With the aid of complete numerical integration techniques \cite{2017_Mech}, a time-optimal trajectory is obtained under the maximum velocity curve. It consists of accelerating velocity profiles $\beta(s,\dot s)$, decelerating velocity profiles $\alpha(s,\dot s)$ and switch arcs, where switch arcs are feasible parts satisfying path acceleration constraints (\ref{eq_alp_bet}) in the maximum velocity curve \cite{2017_Mech}. Thus, the feasible trajectory is attached to the maximum velocity curve. Note that the path acceleration of the feasible trajectory is discontinuous. For instance, accelerating velocity profiles $(\beta_1,\beta_2)$, decelerating velocity profiles $(\alpha_1,\alpha_2)$ and the switch arc $\overline{p_1p_2}$ constitute the feasible trajectory attached to the maximum velocity curve in Fig. \ref{fig_traj_generation}. Its discontinuous path acceleration occurs at points $p_3$ and $p_4$, which will be smoothened to achieve acceleration-continuous trajectory in Section III-C.

\subsection{Tradability Mechanism}

In order to adjust the traveling time and cruise proportion of the feasible trajectories attached to the maximum velocity curve, a functional parameter $\varepsilon$ changing the maximum velocity curve is introduced as an user-adjustable path velocity constraint
\begin{equation}\label{eq_param_cons}
\dot s \leq \varepsilon, \ s\in[0,s_e],
\end{equation}
where $\varepsilon \in [\mathrm{max}(\dot s_0,\dot s_e), \mathrm{Max}(MVC)]$ with $\mathrm{Max}(MVC)$ being the maximum value of $MVC(s)$ with $s\in[0,s_e]$.

In order to satisfy the constraint (\ref{eq_param_cons}), a velocity limit curve is described as
\begin{equation}\label{eq_m_param}
M(s) = \varepsilon, \ s\in[0,s_e].
\end{equation}
In Fig. \ref{fig_limit_curve}, $M(s)$ is represented as a blue small dash straight line, which moves up and down between  $\dot s_0$ and $\mathrm{Max}(MVC)$.

The velocity limit curve decided by $\varepsilon$ reconstructs the maximum velocity curve as
\begin{equation} \label{eq_mvc_star}
MVC^*(s)=\mathrm{min}(MVC(s), M(s)), \  s\in[0,s_e].
\end{equation}

\begin{figure}[t]
      \centering
      \includegraphics[scale=0.7]{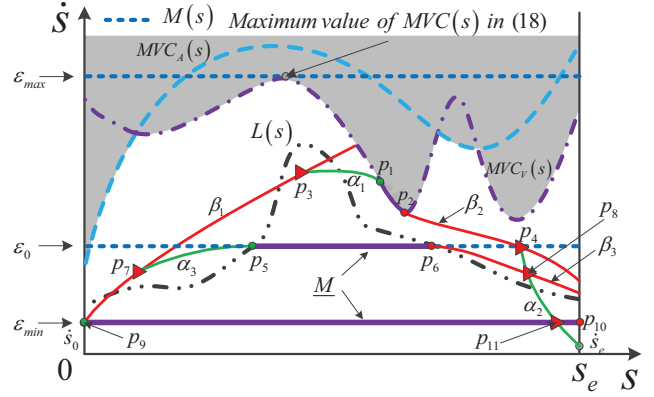}
      \caption{It is a diagram of the adjustment of the functional parameter $\varepsilon$. The amplitude and shape of $MVC^*(s)$ change with the variation of $\varepsilon$ $(\varepsilon_{max}\mapsto\varepsilon_{0}\mapsto\varepsilon_{min})$. The decreasing of $\varepsilon$ indicates that the traveling time and cruise proportion of feasible trajectories attached to $MVC^*(s)$ increase. The purple solid straight lines represent $\underline{M}$, which can be taken as switch arcs in $MVC^*(s)$. The green $\lhd$ stands for the switch point in $MVC^*(s)$. The red $\rhd$ represents the intersection point among switch arcs ($\overline{p_5p_6},\overline{p_{9}p_{10}}$), accelerating velocity profiles ($\beta_1,\beta_2,\beta_3$) and decelerating velocity profiles ($\alpha_1,\alpha_2,\alpha_3$).  }
      \label{fig_traj_generation}
\end{figure}

The adjustment of the functional parameter affects the amplitude and shape of $MVC^*(s)$. The physical interpretation of $\varepsilon$ is given as follows:
\begin{itemize}
  \item $\varepsilon \rightarrow \mathrm{max}(\dot s_0,\dot s_e)$: The amplitude of $MVC^*(s)$ decreases, and it indicates that the traveling time of generated trajectories corresponding to $MVC^*(s)$ increases. Simultaneously, the shape of $MVC^*(s)$ tends to be a straight line, and it indicates that feasible trajectories possess a higher proportion of cruise motion.
  \item $\varepsilon \rightarrow \mathrm{Max}(MVC)$: The amplitude of $MVC^*(s)$ increases, and it indicates that the traveling time of generated trajectories corresponding to $MVC^*(s)$ decreases and tends to be optimal. Simultaneously, the shape of $MVC^*(s)$ is closer to $MVC(s)$, and it indicates that the cruise proportion of feasible trajectories decreases.
\end{itemize}
For example, the functional parameter $\varepsilon$ is adjusted to $\varepsilon_{max}$, $\varepsilon_{0}$ and $\varepsilon_{min}$ in Fig. \ref{fig_traj_generation}. For $\varepsilon = \varepsilon_{max}$, the generated trajectory consists of the switch arc $\overline{p_1p_2}$, accelerating ($\beta_1,\beta_2$) and decelerating ($\alpha_1,\alpha_2$) velocity profiles. For $\varepsilon=\varepsilon_{0}$, the generated trajectory consists of the switch arc $\overline{p_5p_6}$, accelerating ($\beta_1,\beta_3$) and decelerating $(\alpha_2,\alpha_3)$ velocity profiles. For $\varepsilon=\varepsilon_{min}$, the generated trajectory consists of the switch arc $\overline{p_{9}p_{10}}$ and decelerating $(\alpha_2)$ velocity profile. As $\varepsilon$ reduces $(\varepsilon_{max}\mapsto\varepsilon_{0}\mapsto\varepsilon_{min})$, the traveling time and cruise proportion of feasible trajectories increase. Note that the discontinuous acceleration of the feasible trajectories at red $\rhd$ will be addressed with a bidirectional integration operation in Section III-C.

The generation of feasible trajectories attached to $MVC^*(s)$ needs to search switch arcs along $MVC^*(s)$ \cite{2017_Mech}. The computational time of searching switch arcs is in proportional to the length of the given path. Therefore, in order to reduce the computational time, we introduce a new concept called `constant velocity boundary' in order to speed up the computation of switch arcs in $M(s)$ of $MVC^*(s)$.

\begin{defn} \label{def_cvb}
Constant velocity boundary is a continuous curve which divides the plane $(s,\dot s)$ into constant and non-constant velocity regions. Below the boundary, constant velocity profiles satisfy the path acceleration constraints, while above the boundary, constant velocity profiles violate the path acceleration constraints. 
\end{defn}

According to the \emph{Definition \ref{def_cvb}}, the computational formula of the constant velocity boundary is given as
\begin{equation} \label{eq_cvb}
L(s)\!=\!\mathrm{min}\left\{ \sqrt{-\dfrac{C_i(s)}{B_i(s)}} | \dfrac{C_i(s)}{B_i(s)}\leq0, A_i(s) \!\neq 0\! \right\},  s\in[0,s_e].
\end{equation}
For instance, $L(s)$ is represented as the black dash-dot-dot curve in Fig. \ref{fig_limit_curve}. In the implementation code, the curve $L(s)$ is stored in a table structure. By querying the table, the solution of the equation $L(s)=M(s)$ is solved in $O(1)$ time complexity to obtain
\begin{align}
\overline{M} &= \{M(s)|M(s)>L(s), s\in[0,s_e]\}, \label{eq_over_M} \\
\underline{M}&= \{M(s)|M(s)\leq L(s), s\in[0,s_e]\}. \label{eq_under_M}
\end{align}
Both lines $\overline{M}$ and $\underline{M}$ possess the same path acceleration $\ddot s=0$, however only $\underline{M}$ satisfies path acceleration constraints (\ref{eq_alp_bet}). With respect to $O(s_e)$ time complexity of searching along $MVC^*(s)$ \cite{2017_Mech}, the look-up table query only has the time complexity of $O(1)$ to obtain $\underline{M}$ as switch arcs of feasible trajectories, such as the purple straight solid lines in Fig. \ref{fig_traj_generation}. Thus, the constant velocity boundary reduces the computational time of switch arcs in $M(s)$ of $MVC^*(s)$.

\begin{rem}
According to the equation (\ref{eq_cvb}), the table structure computation of $L(s)$ is derived from the equation (\ref{eq_abc}), which is same with the computation of velocity limit curves $MVC_A(s)$, $MVC_V(s)$ and $M(s)$. Therefore, the constant velocity boundary $L(s)$ together with these velocity limit curves can be obtained through parallel computation. \QEDA
\end{rem}

\subsection{Trajectory Generation}

This subsection describes the procedures of the proposed algorithm generating feasible trajectories under those velocity limit curves as follows. First, with the aid of complete numerical integration techniques \cite{2017_Mech}, time-optimal trajectories with discontinuous path acceleration are generated under $MVC^*(s)$. Then, the discontinuous path acceleration is addressed by a bidirectional integration operation. Through adjusting the user-specified functional parameter $\varepsilon$ in (\ref{eq_m_param}), the straight line $M(s)$ moving up and down reconstructs $MVC^*(s)$ to change the computational time, traveling time and cruise proportion of generated trajectories in real time.

\textbf{Complete numerical integration (CNI)} computes time-optimal trajectories with discontinuous path acceleration under $MVC^*(s)$:

\begin{figure}[t]
      \centering
      \includegraphics[scale=0.7]{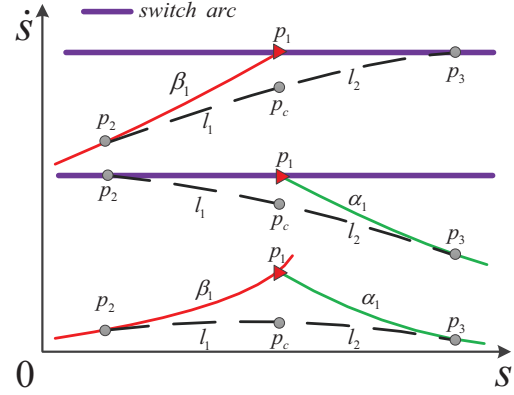}
      \caption{It is a diagram of three intersection cases among switch arcs, accelerating and decelerating velocity profiles. Top: The accelerating velocity profile $\beta_1$ intersects the switch arc at the point $p_1$. Middle: The switch arc intersects the decelerating velocity profile $\alpha_1$ at the point $p_1$. Bottom: The accelerating velocity profile $\beta_1$ intersects the decelerating velocity profile $\alpha_1$ at the point $p_1$. The black big dash curves are computed by the proposed bidirectional integration operation. }
      \label{fig_discontinuous_acceleration}
\end{figure}

1) Switch search: The goal of this step is searching switch arcs and switch points along $MVC^*(s)$. There are three types of switch points in $MVC^*(s)$: Tangent points, discontinuity points and zero-inertia points \cite{2017_Mech}. The switch points and endpoints of switch arcs are used as the starting points of numerical integration: Forward integration and Backward integration.

2) Forward integration: The goal is computing accelerating velocity profiles by doing forward numerical integration from the point $(0,\dot s_0)$,  switch points and right endpoints of switch arcs with maximum path acceleration $\beta(s,\dot s)$.

3) Backward integration: The goal is computing decelerating velocity profiles by doing backward numerical integration from the point $(s_e,\dot s_e)$, switch points and left endpoints of switch arcs with minimum path acceleration $\alpha(s,\dot s)$.

Time-optimal trajectories under $MVC^*(s)$ are generated by intersecting these accelerating and decelerating velocity profiles and switch arcs. For example, a time-optimal trajectory consists of switch arc ($\overline{p_5p_6}$), accelerating velocity profiles ($\beta_1,\beta_3$) and decelerating velocity profiles ($\alpha_2,\alpha_3$) for $\varepsilon=\varepsilon_0$ as shown in Fig. \ref{fig_traj_generation}. The path acceleration of the time-optimal trajectory is discontinuous at the intersection points, such as $p_7$ and $p_8$ in Fig. \ref{fig_traj_generation}. There are three intersection cases among switch arcs and accelerating and decelerating velocity profiles as shown in Fig. \ref{fig_discontinuous_acceleration}. In order to guarantee continuous path acceleration, a bidirectional integration operation is proposed around the intersection points.

\textbf{Bidirectional integration operation (BIO)} computes acceleration-continuous velocity profiles to connect trajectories at two sides of intersection points:

1) At two sides of the intersection point $p_1$, find two arbitrary points $p_2$ and $p_3$ in the generated trajectory by CNI, so that the point $p_1$ is the only intersection point between $p_2$ and $p_3$, as shown in Fig. \ref{fig_discontinuous_acceleration}. The scalars $s_{p_i},\dot s_{p_i}$ represent the path coordinate and path velocity of the point $p_i, i\in[1,3]$ on the plane $(s,\dot s)$, while the scalars $\ddot s_{p_2}$ and $\ddot s_{p_3}$ represent the path acceleration of the generated trajectories by CNI at $p_2$ and $p_3$, respectively.

2) Starting from $p_2$, one velocity profile $l_1$ is computed by doing forward numerical integration using the path acceleration $\ddot s_1$ as follow:
\begin{equation}\label{eq_forward_acc}
\ddot s_1=\beta^*(s,\dot s)-\dfrac{(s-s_{p_2})(\beta^*(s,\dot s)-\alpha^*(s,\dot s))}{s_{p_3}-s_{p_2}}
\end{equation}
until $s=s_{p_1}$.
Starting from $p_3$, another velocity profile $l_2$ is computed by doing backward numerical integration using the path acceleration $\ddot s_2$ as follow:
\begin{equation}\label{eq_backward_acc}
\ddot s_2=\alpha^*(s,\dot s)+\dfrac{(s_{p_3}-s)(\beta^*(s,\dot s)-\alpha^*(s,\dot s))}{s_{p_3}-s_{p_2}}
\end{equation}
until $s=s_{p_1}$. In (\ref{eq_forward_acc}) and (\ref{eq_backward_acc}), the $\beta^*(s,\dot s)$ and $\alpha^*(s,\dot s)$ are computed as
\begin{align}
\beta^*(s,\dot s) &= \alpha(s,\dot s) + (\beta(s,\dot s)-\alpha(s,\dot s))\delta_1, \label{eq_deta_1} \\
\alpha^*(s,\dot s) &= \alpha(s,\dot s) + (\beta(s,\dot s)-\alpha(s,\dot s))\delta_2, \label{eq_deta_2}
\end{align}
with
\begin{align}
\delta_1 &= (\ddot s_{p_2}-\alpha(s_{p_2},\dot s_{p_2}))/(\beta(s_{p_2},\dot s_{p_2})-\alpha(s_{p_2},\dot s_{p_2})), \nonumber \\
\delta_2 &= (\ddot s_{p_3}-\alpha(s_{p_3},\dot s_{p_3}))/(\beta(s_{p_3},\dot s_{p_3})-\alpha(s_{p_3},\dot s_{p_3})). \nonumber
\end{align}

\begin{thm} \label{thm_1}
The velocity profile consisting of $l_1$ and $l_2$ is continuous, and its path acceleration is also continuous at $s_{p_1}$.
\end{thm}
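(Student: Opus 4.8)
The plan is to verify continuity of the concatenated profile and continuity of its path acceleration at the junction point $s_{p_1}$ separately, treating $l_1$ as the forward solution of the ODE $d\dot s/ds = \ddot s_1/\dot s$ with $\ddot s_1$ given by (\ref{eq_forward_acc}) and initial data at $p_2$, and $l_2$ as the backward solution of $d\dot s/ds = \ddot s_2/\dot s$ with $\ddot s_2$ from (\ref{eq_backward_acc}) and terminal data at $p_3$. Since the vector field on the right-hand side is continuous in $(s,\dot s)$ away from $\dot s=0$, each of $l_1$ and $l_2$ is a well-defined $C^1$ curve on its respective subinterval, so the only thing in question is the matching at the shared endpoint $s=s_{p_1}$.

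First I would establish position continuity, i.e. that the terminal value of $l_1$ at $s_{p_1}$ equals the terminal value of $l_2$ at $s_{p_1}$. The key observation is that $p_1$ lies on the trajectory produced by CNI between $p_2$ and $p_3$, and $p_2,p_3$ themselves lie on that trajectory; the construction of $\ddot s_1$ and $\ddot s_2$ is precisely a convex-combination interpolation of the admissible acceleration bounds $\alpha^*,\beta^*$ designed so that $l_1$ starts at $p_2$ with acceleration $\ddot s_{p_2}$ and $l_2$ ends at $p_3$ with acceleration $\ddot s_{p_3}$, matching the CNI trajectory at those points. I would argue that both $l_1$ and $l_2$ are sandwiched between the maximum-acceleration and maximum-deceleration curves emanating appropriately, and that they must both pass through $p_1$ because $p_1$ is the unique intersection point of the CNI trajectory segment with itself in $[s_{p_2},s_{p_3}]$; more carefully, one checks that the construction forces $l_1$ and $l_2$ to cross at exactly one interior point and that this point is $p_1$. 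This is the step I expect to be the main obstacle: pinning down rigorously why the forward curve $l_1$ and the backward curve $l_2$ meet at $s_{p_1}$ with the same $\dot s$ value, rather than at some nearby abscissa or with a gap. I would handle it by examining the three intersection cases of Fig.~\ref{fig_discontinuous_acceleration} (accelerating/switch, switch/decelerating, accelerating/decelerating) and using the monotonicity of $\dot s$ along accelerating versus decelerating profiles together with the fact that at $s=s_{p_1}$ the linear weight $(s-s_{p_2})/(s_{p_3}-s_{p_2})$ appearing in (\ref{eq_forward_acc}) and its complement in (\ref{eq_backward_acc}) are consistent.

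Second, I would establish acceleration continuity at $s_{p_1}$. Evaluating (\ref{eq_forward_acc}) at $s=s_{p_1}$ gives
\begin{align}
\ddot s_1\big|_{s_{p_1}} &= \beta^*(s_{p_1},\dot s_{p_1}) - \frac{(s_{p_1}-s_{p_2})\big(\beta^*(s_{p_1},\dot s_{p_1})-\alpha^*(s_{p_1},\dot s_{p_1})\big)}{s_{p_3}-s_{p_2}}, \nonumber
\end{align}
and evaluating (\ref{eq_backward_acc}) at $s=s_{p_1}$ gives
\begin{align}
\ddot s_2\big|_{s_{p_1}} &= \alpha^*(s_{p_1},\dot s_{p_1}) + \frac{(s_{p_3}-s_{p_1})\big(\beta^*(s_{p_1},\dot s_{p_1})-\alpha^*(s_{p_1},\dot s_{p_1})\big)}{s_{p_3}-s_{p_2}}. \nonumber
\end{align}
Once position continuity is in hand (so that the two expressions are evaluated at the same point $(s_{p_1},\dot s_{p_1})$), a direct algebraic simplification using $(s_{p_1}-s_{p_2})+(s_{p_3}-s_{p_1})=s_{p_3}-s_{p_2}$ shows both right-hand sides equal
\begin{align}
\frac{(s_{p_3}-s_{p_1})\,\beta^*(s_{p_1},\dot s_{p_1}) + (s_{p_1}-s_{p_2})\,\alpha^*(s_{p_1},\dot s_{p_1})}{s_{p_3}-s_{p_2}}, \nonumber
\end{align}
hence $\ddot s_1=\ddot s_2$ at $s_{p_1}$. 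I would close by noting that $C^0$ matching of the curves plus equality of the one-sided path accelerations at the junction yields a globally continuous velocity profile with continuous path acceleration, which is the claim. I would also remark that $\delta_1,\delta_2\in[0,1]$ because $\ddot s_{p_2},\ddot s_{p_3}$ satisfy (\ref{eq_alp_bet}), so $\alpha^*,\beta^*$ in (\ref{eq_deta_1})--(\ref{eq_deta_2}) stay within $[\alpha,\beta]$ and the constructed accelerations are themselves feasible, which is needed for the profile to be admissible though not strictly for the continuity statement.
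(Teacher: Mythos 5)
Your second step (acceleration continuity at the junction) is essentially the paper's own argument: evaluate (\ref{eq_forward_acc}) and (\ref{eq_backward_acc}) at the common point and observe the difference vanishes; indeed both reduce to the convex combination $(1-w)\beta^*+w\alpha^*$ with $w=(s-s_{p_2})/(s_{p_3}-s_{p_2})$, and this is valid conditional on the $C^0$ matching. The genuine gap is in the first step, and the mechanism you propose for it would fail. Your claim that $l_1$ and $l_2$ ``must both pass through $p_1$'' is false: since $\ddot s_1\leq\beta$ and $\ddot s_2\geq\alpha$ pointwise, $l_1$ stays below the profile of the CNI trajectory through $p_2$ and $l_2$ stays below the profile through $p_3$, so both curves pass strictly below the corner $p_1$ whenever $p_2\neq p_1\neq p_3$; cutting that corner is precisely the purpose of BIO (see the black dashed curves in Fig.~\ref{fig_discontinuous_acceleration}). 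Moreover, the algebraic identity you derived already shows why the matching is delicate: $\ddot s_1$ and $\ddot s_2$ are the \emph{same} function of $(s,\dot s)$, so $l_1$ and $l_2$ are forward and backward integral curves of one and the same ODE, and their agreement at $s_{p_1}$ is equivalent to the shooting condition that $p_3$ lie on the forward solution issued from $p_2$. Monotonicity of accelerating versus decelerating profiles and ``consistency of the weights'' cannot deliver this: for instance, with constant bounds $\beta=-\alpha$ the ODE for $\dot s^2$ has state-independent right-hand side, so the forward and backward curves differ by a constant in $\dot s^2$ and, for an asymmetric placement of $p_2,p_3$ about $p_1$, never meet at all. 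So the step you yourself flagged as the main obstacle is left unresolved, and the route sketched for it is incorrect.

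For comparison, the paper treats the $C^0$ part by a different (and admittedly terse) argument: in the degenerate choice $p_2=p_3=p_1$ the connection point trivially exists, and continuity of $\alpha(s,\dot s)$ and $\beta(s,\dot s)$ \cite{2017_RAL} is invoked to assert that a connection point $p_c$ persists when $p_2,p_3$ are moved to either side of $p_1$; acceleration continuity is then obtained exactly as in your computation by substituting $p_c$ into (\ref{eq_forward_acc}) and (\ref{eq_backward_acc}). Your closing remark that $\delta_1,\delta_2\in[0,1]$ because $\ddot s_{p_2},\ddot s_{p_3}$ satisfy (\ref{eq_alp_bet}), so the interpolated accelerations in (\ref{eq_deta_1})--(\ref{eq_deta_2}) remain feasible, is correct and worthwhile, but it does not substitute for the missing $C^0$ argument.
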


\begin{proof}
When $p_1$ is chosen as $p_2$ and $p_3$, it holds that $p_1$ is the connection point $p_c$ of $l_1$ and $l_2$. According to the continuity of $\alpha(s,\dot s)$ and $\beta(s,\dot s)$ on the plane $(s,\dot s)$ \cite{2017_RAL}, the connection point $p_c$ of $l_1$ and $l_2$ still exists when two arbitrary points at two sides of $p_1$ are chosen as $p_2$ and $p_3$ as shown in Fig. \ref{fig_discontinuous_acceleration}. Therefore, the velocity profile consisting of $l_1$ and $l_2$ is continuous.

The path acceleration $\ddot s_1$ and $\ddot s_2$ of $l_1$ and $l_2$ at $s_{p_1}$ are obtained by substituting the path coordinate and path velocity of $p_c$ into (\ref{eq_forward_acc}) and (\ref{eq_backward_acc}), respectively. According to (\ref{eq_forward_acc}) and (\ref{eq_backward_acc}), the difference between $\ddot s_1$ and $\ddot s_2$ is equal to zero. Thus, the path acceleration of the velocity profile consisting of $l_1$ and $l_2$ is continuous at $s_{p_1}$. \QEDA
\end{proof}

The whole procedure of the proposed algorithm is given in \textbf{Algorithm \ref{alg_1}}. According to (\ref{eq_mvc_star}) and (\ref{eq_under_M}), the subfunction does parallel computation to obtain $MVC^*$ decided by the user-specified parameter $\epsilon$ and switch arcs $\underline{M}$ in $MVC^*$. Then, with the aid of $MVC^*$ and $\underline{M}$, the subfunction CNI outputs a result $\Omega$ of constructing trajectory between $\dot s_0$ and $\dot s_e$. The $\Omega$ may be an acceleration-discontinuous trajectory or failure \cite{2017_Mech}. For the acceleration-discontinuous $\Omega$, the subfunction BIO is called to return a smoothened time-optimal trajectory.

\begin{algorithm}[t]
\caption{The proposed algorithm}\label{alg_1} 
\hspace*{0.02in} {\bf Input:} 
${MVC(s)}$,~${L(s)}$,~${\dot s_0}$,~${\dot s_e}$,~${\epsilon}$.\\
\hspace*{0.02in} {\bf Output:} 
a solution or failure.
\begin{algorithmic}[1]
\STATE {$MVC^*, \underline{M} \leftarrow$ PreCompute$(MVC(s),L(s),\epsilon)$}
\STATE {$\Omega \leftarrow$ CNI$(MVC^*, \underline{M}, \dot s_0, \dot s_e)$ }
\IF {$\Omega$ is a failure }
    \RETURN { failure }
\ENDIF
\STATE {solution $\leftarrow$ BIO$(\Omega)$}
\RETURN {solution}
\end{algorithmic}
\end{algorithm}

\begin{thm} \label{thm_1_1}
The proposed algorithm is complete for a path-constrained planning problem.
\end{thm}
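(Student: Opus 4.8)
The plan is to derive the completeness of Algorithm~\ref{alg_1} from the completeness of the complete numerical integration (CNI) routine of \cite{2017_Mech}, and then to show that the bidirectional integration operation (BIO) never turns a CNI success into a failure, never manufactures a spurious solution, and always halts. Fix an arbitrary instance --- a path $\bm q(s)$, the bounds $\bm v_{max}$, $\bm a_{max}$, boundary velocities $\dot s_0,\dot s_e$, and a parameter $\varepsilon$ --- which, through Section~III-A and \eqref{eq_mvc_star}, determines a reconstructed curve $MVC^*(s)$ and a path-acceleration cone $[\alpha(s,\dot s),\beta(s,\dot s)]$. Call the instance \emph{solvable} if some profile $\dot s(\cdot)$ satisfies $\dot s(0)=\dot s_0$, $\dot s(s_e)=\dot s_e$, $0\le\dot s(s)\le MVC^*(s)$ and \eqref{eq_alp_bet} on $[0,s_e]$, with no continuity requirement on $\ddot s$. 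I need to prove: (i) if the instance is solvable, the algorithm returns in finite time a trajectory of this kind that is, in addition, acceleration-continuous; (ii) otherwise it returns a failure in finite time.

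Case (ii) is immediate: CNI is provably complete \cite{2017_Mech}, so line~2 of Algorithm~\ref{alg_1} returns a failure in finite time when no feasible profile under $MVC^*$ exists, and lines~3--5 then return a failure. For case (i), by the same completeness CNI returns in finite time a feasible profile $\Omega$, piecewise made of maximum-acceleration arcs $\beta(s,\dot s)$, maximum-deceleration arcs $\alpha(s,\dot s)$ and switch arcs lying on $MVC^*(s)$, with $\ddot s$ discontinuous only at the finitely many intersection points enumerated by CNI. BIO is then invoked once per intersection point $p_1$, and for each I would check three properties. \emph{Termination}: each call performs the two numerical integrations \eqref{eq_forward_acc}--\eqref{eq_backward_acc} over the bounded sub-interval $[s_{p_2},s_{p_3}]$, and choosing these windows pairwise disjoint and strictly interior to the arcs incident to $p_1$ keeps the calls independent, so BIO halts; together with finiteness of CNI this bounds the total running time. \emph{Acceleration-continuity}: on the untouched portions $\Omega$ already has continuous $\ddot s$ since $\alpha$, $\beta$ and the switch arcs are smooth; at $p_1$ Theorem~\ref{thm_1} supplies continuity of the profile and of its path acceleration; and at the new junctions $p_2,p_3$ the constants $\delta_1,\delta_2$ in \eqref{eq_deta_1}--\eqref{eq_deta_2} are set precisely so that $\beta^*$ and $\alpha^*$ match the acceleration of $\Omega$ there.

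\emph{Feasibility}: because $\ddot s_{p_2},\ddot s_{p_3}$ are accelerations of the feasible profile $\Omega$, they lie in the cone, hence $\delta_1,\delta_2\in[0,1]$, hence $\alpha^*(s,\dot s),\beta^*(s,\dot s)\in[\alpha(s,\dot s),\beta(s,\dot s)]$ by \eqref{eq_deta_1}--\eqref{eq_deta_2}; the accelerations $\ddot s_1,\ddot s_2$ of $l_1,l_2$ in \eqref{eq_forward_acc}--\eqref{eq_backward_acc} are convex combinations of $\alpha^*$ and $\beta^*$, so they also lie in $[\alpha(s,\dot s),\beta(s,\dot s)]$ and \eqref{eq_alp_bet} holds along $l_1$ and $l_2$. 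For the velocity-curve bound, $l_1$ integrates forward from a point of $\Omega$ with acceleration no larger than the locally prevailing $\beta$ (resp. $l_2$ integrates backward with acceleration no smaller than $\alpha$), so it stays weakly below the accelerating (resp. decelerating) branch of $\Omega$ and therefore below $MVC^*(s)$. Since BIO edits only interior segments, $\dot s_0$ and $\dot s_e$ are preserved, so the output is an acceleration-continuous solution, proving (i). In particular this also shows that acceleration-continuous solvability and ordinary solvability coincide, so no valid instance is wrongly rejected.

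The hard part will be the velocity-curve bound in the feasibility step across \emph{all three} configurations of Fig.~\ref{fig_discontinuous_acceleration}: when an incident branch of $\Omega$ at $p_1$ is a switch arc, the estimate ``$l_1$ stays below $\Omega$'' is no longer a comparison against a $\beta$-arc but against a curve lying on the boundary $MVC^*(s)$, so one has to use that $l_1$ leaves $p_2$ tangent to that switch arc --- its acceleration there equals $\ddot s_{p_2}$ --- and that $\ddot s_1$ decreases monotonically in $s$ away from $p_2$, to conclude $l_1$ peels into the feasible region rather than crossing the limit curve. A secondary and routine matter is justifying that CNI emits only finitely many intersection points and that the $[s_{p_2},s_{p_3}]$ windows can be taken pairwise disjoint; both follow from the piecewise structure of the CNI output established in \cite{2017_Mech}.
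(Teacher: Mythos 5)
Your skeleton is the same as the paper's: reduce completeness to the proven completeness of CNI \cite{2017_Mech} (failure case handled immediately), then note that in the solvable case CNI yields a feasible but acceleration-discontinuous profile, that BIO is invoked once per intersection point, that \emph{Theorem \ref{thm_1}} handles the junctions, and that the number of intersection points is finite (the paper cites \cite{1985_TAC} for this; you appeal to the piecewise structure of the CNI output, which amounts to the same thing), so the algorithm halts with a solution. Where you genuinely diverge is in scope: the paper's proof stops there, simply calling the BIO-generated profiles ``feasible'' with a pointer to \emph{Theorem \ref{thm_1}}, which in fact only establishes continuity of the profile and of its acceleration at $s_{p_1}$, not constraint satisfaction. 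Your extra feasibility audit is a real addition --- the observation that $\delta_1,\delta_2\in[0,1]$ because $\ddot s_{p_2},\ddot s_{p_3}$ come from a feasible CNI profile, so that $\alpha^*,\beta^*$ and hence the accelerations \eqref{eq_forward_acc}--\eqref{eq_backward_acc} are convex combinations inside the cone $[\alpha,\beta]$, is correct (wherever $\beta\ge\alpha$, i.e.\ below $MVC_A$) and appears nowhere in the paper. The one step you flag as the ``hard part'' --- showing the BIO arcs $l_1,l_2$ stay below $MVC^*(s)$ in all three configurations of Fig.~\ref{fig_discontinuous_acceleration}, in particular when an incident branch is a switch arc lying on the limit curve --- is left as a sketch in your proposal, but it is not addressed by the paper at all; the paper implicitly takes feasibility of the BIO output for granted. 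So judged against the paper's own standard your argument is complete and somewhat more careful; judged strictly, that velocity-bound step remains open in both, and closing it (e.g.\ by a comparison/tangency argument at $p_2,p_3$ as you outline, or by shrinking the window $[s_{p_2},s_{p_3}]$ until the BIO arcs are confined to a neighborhood where clearance below $MVC^*$ is guaranteed) would strengthen the theorem beyond what the paper actually proves.
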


\begin{proof}
The proposed algorithm consists of CNI and BIO. The work in \cite{2017_Mech} has proven that CNI is complete for a path-constrained planning problem. When the planning problem is unsolvable, the CNI returns a failure in finite time, and then the proposed algorithm terminates and returns the failure immediately. When the planning problem is solvable, the CNI outputs a feasible trajectory with discontinuous path acceleration, which occurs at intersection points among switch arcs, accelerating and decelerating velocity profiles. Then, BIO is called to compute acceleration-continuous feasible velocity profiles to connect trajectories at two sides of intersection points, which haven been proven in \emph{Theorem \ref{thm_1}}. The number of intersection points is finite \cite{1985_TAC}, thus the proposed algorithm does a finite number of BIO function invocations and it returns an acceleration-continuous feasible trajectory in finite time.  \QEDA
\end{proof}



\section{Proof of Tradability Mechanism}

This section provides three theorems of the tradability mechanism of the proposed algorithm with detailed proofs.

\begin{thm} \label{thm_2}
With the increasing of $\varepsilon$, the traveling time of trajectories generated by the proposed algorithm decreases until it achieves the time-optimal motion.
\end{thm}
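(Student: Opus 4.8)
The plan is to reduce the claim to a pointwise comparison of the reconstructed maximum velocity curves and then to invoke the pointwise-maximality characterization of time-optimal velocity profiles. First I would note that $\varepsilon$ enters the algorithm only through $M(s)=\varepsilon$ in (\ref{eq_m_param}) and $MVC^*(s)=\min(MVC(s),M(s))$ in (\ref{eq_mvc_star}); hence for two admissible values $\varepsilon_1\le\varepsilon_2$ one has $MVC^*_{\varepsilon_1}(s)\le MVC^*_{\varepsilon_2}(s)$ for every $s\in[0,s_e]$, with equality wherever $MVC(s)\le\varepsilon_1$. The acceleration bounds $\alpha(s,\dot s),\beta(s,\dot s)$ in (\ref{eq_alp_bet}) and the boundary velocities $\dot s_0,\dot s_e$ are independent of $\varepsilon$, and $\dot s_0,\dot s_e\le\varepsilon$ holds over the admissible range. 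Consequently every path-velocity profile feasible under $MVC^*_{\varepsilon_1}$ (i.e. satisfying (\ref{eq_alp_bet}), the endpoint conditions, and $\dot s\le MVC^*_{\varepsilon_1}$) is also feasible under $MVC^*_{\varepsilon_2}$: the feasible set is monotonically nested in $\varepsilon$.

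Next I would use the property, established for the CNI procedure in \cite{2017_Mech} (and classical since \cite{1985_TAC,1985_IJRR}), that the time-optimal profile produced under a given maximum velocity curve is the pointwise supremum of all feasible profiles and is itself feasible. Writing $\phi_{\varepsilon}(\cdot)$ for the time-optimal profile generated under $MVC^*_{\varepsilon}$, the nesting gives that $\phi_{\varepsilon_1}$ is feasible under $MVC^*_{\varepsilon_2}$, so pointwise maximality of $\phi_{\varepsilon_2}$ yields $\phi_{\varepsilon_1}(s)\le\phi_{\varepsilon_2}(s)$ for all $s$. Since the traveling time is $T(\varepsilon)=\int_0^{s_e}\mathrm{d}s/\phi_{\varepsilon}(s)$ with $\phi_{\varepsilon}>0$ on $(0,s_e)$, integrating the pointwise inequality gives $T(\varepsilon_1)\ge T(\varepsilon_2)$, so the traveling time is non-increasing in $\varepsilon$ (and strictly decreasing on any $\varepsilon$-interval over which $M(s)$ is the active constraint along part of the profile). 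In the structural language of Section III this is the statement that raising $M(s)$ lifts the switch arcs lying on the flat portion of $MVC^*$ and lets the forward-$\beta$ and backward-$\alpha$ integration curves attain strictly higher, never lower, path velocities, as in Fig. \ref{fig_traj_generation}.

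For the limiting assertion, when $\varepsilon=\mathrm{Max}(MVC)$ we have $M(s)=\mathrm{Max}(MVC)\ge MVC(s)$ for all $s$, so $MVC^*(s)\equiv MVC(s)$, the reconstruction is inactive, and CNI returns exactly the time-optimal trajectory of the original problem; hence the generated motion coincides with the time-optimal motion, which together with the monotonicity proves the theorem. (The opposite extreme $\varepsilon=\mathrm{max}(\dot s_0,\dot s_e)$ gives the slowest admissible motion and anchors the physical interpretation, but is not needed for the proof.)

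The principal difficulty is technical bookkeeping, in two places. First, $\phi_{\varepsilon}$ must be well defined, i.e. the problem must remain solvable under every admissible $MVC^*_{\varepsilon}$; this holds because capping the velocity uniformly cannot create new regions where $\alpha>\beta$ (the bounds do not depend on the cap) and cannot destroy the ability to decelerate to $\dot s_e$ or to maintain a low connecting velocity --- the last fact being exactly what the constant velocity boundary $L(s)$ of \emph{Definition \ref{def_cvb}} certifies --- so the comparison is carried out over the $\varepsilon$-range for which CNI succeeds, in agreement with the completeness result \emph{Theorem \ref{thm_1_1}}. Second, the trajectory actually returned is the BIO-smoothened one rather than the bang-bang profile $\phi_{\varepsilon}$, so one should check that the smoothing does not reverse the monotonicity; I would handle this either by comparing the pre-BIO time-optimal profiles and treating BIO as a fixed post-processing step applied in both cases, or by bounding the $l_1,l_2$ arcs of \emph{Theorem \ref{thm_1}} within a controlled neighbourhood of each corner so that the added time is dominated by the gain from raising the ceiling. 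Turning that last bound into a fully rigorous argument, rather than appealing to the ``dominant effect'', is where the real work lies.
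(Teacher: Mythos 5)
Your proposal is correct and takes essentially the same route as the paper's own proof: monotonicity of $MVC^*(s)$ in $\varepsilon$ via (\ref{eq_mvc_star}) gives nested admissible regions, and the time-optimality of the CNI-generated trajectory under $MVC^*(s)$ then yields the monotone traveling time, with the limiting case $\varepsilon=\mathrm{Max}(MVC)$ recovering the time-optimal motion. The additional points you raise (pointwise-maximality and integration of $1/\dot s$, solvability for every admissible $\varepsilon$, and whether BIO smoothing preserves the monotonicity) are refinements the paper's proof simply glosses over, not a different approach.
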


\begin{proof}
According to (\ref{eq_mvc_star}), the adjustment of $\varepsilon$ from $\varepsilon_2$ to $\varepsilon_1$ ($\varepsilon_2 < \varepsilon_1$) indicates that $MVC^*(s)$ with $\varepsilon_1$ is greater than or equal to $MVC^*(s)$ with $\varepsilon_2$ at each path coordinate $s\in[0,s_e]$. Namely, the admissible region under $MVC^*(s)$ with $\varepsilon_1$ contains the admissible region under $MVC^*(s)$ with $\varepsilon_2$. With the aid of CNI, the trajectory generated by the proposed algorithm is time-optimal under $MVC^*(s)$. Thus, the traveling time of  the trajectory generated under $MVC^*(s)$ with $\varepsilon_1$ is less than the traveling time of the trajectory generated under $MVC^*(s)$ with $\varepsilon_2$.  \QEDA
\end{proof}

\begin{figure}[t]
      \centering
      \includegraphics[scale=0.6]{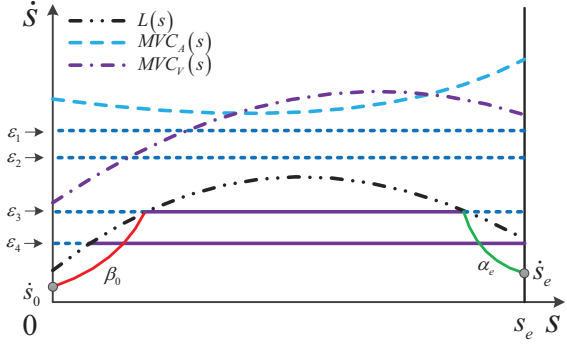}
      \caption{The adjustment of the functional parameter $\varepsilon$ changes the ratio of $\underline{M}, \overline{M}, MVC(s)$ in $MVC^*(s)$. The purple solid straight lines represent $\underline{M}$. The blue small dash straight lines represent $M(s)$. The red and green solid curves represent accelerating and decelerating velocity profiles, respectively. When the functional parameter $\varepsilon$ increases from $\varepsilon_4$ to $\varepsilon_1$, the ratio of $\underline{M}$ in $MVC^*(s)$ decreases, and the ratio of $MVC(s)$ in $MVC^*(s)$ increases.}
      \label{fig_proof}
\end{figure}

\begin{thm} \label{thm_3}
With the increasing of $\varepsilon$, the computational time of trajectories generated by the proposed algorithm increases.
\end{thm}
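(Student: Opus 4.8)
The plan is to decompose the running time of Algorithm~\ref{alg_1} as $T(\varepsilon)=T_{\mathrm{pre}}+T_{\mathrm{CNI}}(\varepsilon)+T_{\mathrm{BIO}}(\varepsilon)$, one term per subroutine, and to show that each term is monotonically non-decreasing in $\varepsilon$, with the CNI term strictly increasing over the admissible range $[\mathrm{max}(\dot s_0,\dot s_e),\mathrm{Max}(MVC)]$. First I would dispose of $T_{\mathrm{pre}}$: \textsc{PreCompute} only forms $MVC^*(s)=\mathrm{min}(MVC(s),\varepsilon)$ by (\ref{eq_mvc_star}) and extracts $\underline{M}$ via the $O(1)$ look-up of the solution of $L(s)=M(s)$ in (\ref{eq_cvb})--(\ref{eq_under_M}); both are done in a single sweep over $[0,s_e]$ whose cost is $\Theta(s_e)$ and is independent of the numerical value of $\varepsilon$, so $T_{\mathrm{pre}}$ is treated as a constant.

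The core of the argument is $T_{\mathrm{CNI}}(\varepsilon)$. I would partition $[0,s_e]$ into the region $R_\varepsilon=\{s\mid \varepsilon\le L(s)\ \text{and}\ \varepsilon<MVC(s)\}$, on which $MVC^*(s)$ coincides with $\underline{M}$ and the feasible switch arcs are obtained by the $O(1)$ table query, and its complement, on which the switch arcs and switch points are produced by the arc-length search of \cite{2017_Mech} at cost proportional to length. Hence $T_{\mathrm{CNI}}(\varepsilon)=c_1\,\mu\big([0,s_e]\setminus R_\varepsilon\big)+O(1)$ for a constant $c_1>0$, where $\mu$ denotes total length. For $\varepsilon_2\le\varepsilon_1$ both defining inequalities of $R_\varepsilon$ are harder to meet, so $R_{\varepsilon_1}\subseteq R_{\varepsilon_2}$ and $\mu([0,s_e]\setminus R_\varepsilon)$ is non-decreasing, strictly increasing wherever $MVC(s)$ is not locally constant; therefore $T_{\mathrm{CNI}}$ increases with $\varepsilon$. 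This also recovers the two limiting regimes noted after (\ref{eq_mvc_star}): for $\varepsilon\to\mathrm{max}(\dot s_0,\dot s_e)$ almost the whole reconstructed curve is $\underline{M}$ and is read off in $O(1)$, whereas for $\varepsilon\to\mathrm{Max}(MVC)$ the entire path is searched, matching the cost of the online time-optimal method \cite{2017_Mech}, consistently with \emph{Theorem~\ref{thm_2}}.

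For $T_{\mathrm{BIO}}(\varepsilon)$ I would use that BIO is invoked exactly once per intersection point among switch arcs and accelerating/decelerating velocity profiles, each invocation being $O(1)$, so $T_{\mathrm{BIO}}(\varepsilon)=c_2\,N(\varepsilon)$ where $N(\varepsilon)$ is that number of intersections. The remaining step is to show $N(\varepsilon)$ is non-decreasing: as $\varepsilon$ grows, the flat cap $M(s)=\varepsilon$ drops below fewer ``islands'' of $MVC(s)$, so more tangent, discontinuity and zero-inertia switch points of $MVC(s)$ (and the accelerating/decelerating segments emanating from them) become active in the CNI output, while a long $\underline{M}$ segment that previously merged several acceleration/deceleration arcs into a single switch arc is progressively replaced by curved pieces carrying their own switches; hence the trajectory under $MVC^*(s)$ with $\varepsilon_1$ has at least as many intersection points as the one with $\varepsilon_2\le\varepsilon_1$. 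Summing the three non-decreasing terms, and using that $T_{\mathrm{CNI}}$ strictly increases, gives the claim.

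The step I expect to be the main obstacle is the monotonicity of $N(\varepsilon)$: unlike $\mu([0,s_e]\setminus R_\varepsilon)$, the intersection count is a combinatorial feature of the integrated trajectory rather than a set-inclusion statement, so ``more islands protrude, hence more switches'' must be made precise — e.g.\ showing that each switch point of $MVC(s)$ lying below level $\varepsilon_1$ but at or above $\varepsilon_2$ contributes at least one new intersection, and that coalescence of arcs over an enlarged $\underline{M}$ cannot create intersections faster than it removes them. If a fully rigorous monotone bound on $N(\varepsilon)$ turns out to be awkward, a sufficient fallback is that $N(\varepsilon)$ is bounded below by the number of switch points of $MVC(s)$ strictly below $\varepsilon$ plus the fixed endpoint contributions, a quantity that is itself monotone in $\varepsilon$; then even in the worst case $T_{\mathrm{BIO}}$ does not decrease while $T_{\mathrm{CNI}}$ strictly increases, which already proves the theorem.
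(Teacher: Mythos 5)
Your core argument is essentially the paper's own proof: the CNI cost is $O(1)$ on the portion of $MVC^*(s)$ coming from $\underline{M}$ (obtained via the $L(s)$ table query) and proportional to the remaining path length, and increasing $\varepsilon$ shrinks the $\underline{M}$ portion (the paper's $s^\dag$ plays the role of your $\mu(R_\varepsilon)$), so the $O(s_e-s^\dag)$ search cost grows. The extra $T_{\mathrm{BIO}}(\varepsilon)$ analysis is not present in the paper at all, and indeed your monotonicity claim for $N(\varepsilon)$ is not rigorously established (the fallback of a monotone lower bound does not prevent $N(\varepsilon)$ itself from decreasing), but since the paper's proof rests solely on the dominant search term, this added piece is unnecessary and does not affect the validity of your main argument.
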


\begin{proof}
Under the curve $MVC^*(s)$, the resultant trajectory generated by the proposed algorithm consists of switch arcs, accelerating and decelerating velocity profiles assigned to the given path with the total path length $s_e$. According to (\ref{eq_mvc_star}), switch arcs in $MVC^*(s)$ include the straight lines $\underline{M}$ in $M(s)$ and the curves satisfying path acceleration constraints (\ref{eq_alp_bet}) in $MVC(s)$. With the aid of $L(s)$ in (\ref{eq_cvb}), the $\underline{M}$ is computed as switch arcs with $O(1)$ time complexity. According to (\ref{eq_under_M}), the increasing of $\varepsilon$ indicates that the ratio of $\underline{M}$ in $MVC^*(s)$ decreases. Namely, the corresponding path length $s^\dag$ of $\underline{M}$ decreases as shown in Fig. \ref{fig_proof}. Then, the corresponding path length $s_e-s^\dag$ of the switch arcs in $MVC(s)$, accelerating and decelerating velocity profiles increases, as shown in the case when $\varepsilon$ is adjusted from $\varepsilon_4$ to $\varepsilon_3$ in Fig. \ref{fig_proof}.  The work in \cite{2017_Mech} has described that the time complexity of computing the switch arcs in $MVC(s)$, accelerating and decelerating velocity profiles is $O(s_e-s^\dag)$. Thus, with the increasing of $\varepsilon$, the computational time of resultant trajectories increases. \QEDA
\end{proof}

\begin{figure}[t]
      \centering
      \includegraphics[scale=0.6]{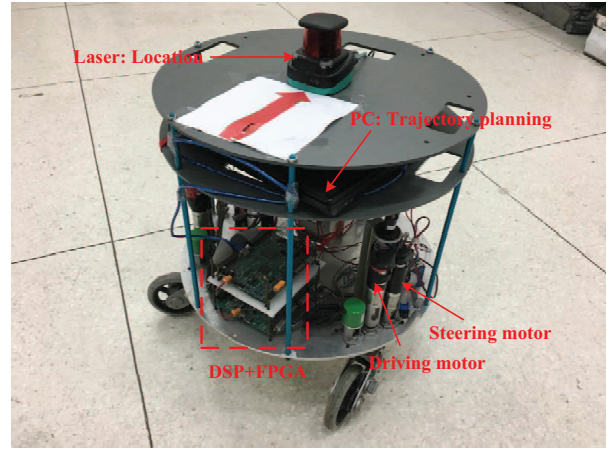}
      \caption{NK-OMNI I \cite{2015_Cyber}: The omnidirectional wheeled mobile robot uses two active casters and one passive caster.}
      \label{fig_nkomni}
\end{figure}

\begin{thm} \label{thm_4}
With the increasing of $\varepsilon$, the cruise proportion of trajectories generated by the proposed algorithm decreases.
\end{thm}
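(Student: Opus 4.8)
The plan is to identify the cruise part of a generated trajectory with the horizontal switch arcs $\underline{M}$ introduced in Section III-C, and then to reuse the monotonicity of the path length of $\underline{M}$ that was already established inside the proof of \emph{Theorem \ref{thm_3}}.

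First I would fix a working definition: a \emph{cruise segment} of a generated trajectory is a maximal sub-arc along which $\ddot s\equiv 0$, and the cruise proportion is the total path length of the cruise segments divided by $s_e$. I would then argue that cruise segments can only occur on $\underline{M}$. By construction, the trajectory returned by the algorithm is composed of: (i) accelerating and decelerating profiles, which are integral curves of $\ddot s=\beta(s,\dot s)$ and $\ddot s=\alpha(s,\dot s)$ and hence are not constant-velocity arcs except possibly at isolated points; (ii) curved switch arcs lying on $MVC(s)$; (iii) the transition arcs produced by BIO (\emph{Theorem \ref{thm_1}}), whose path acceleration (\ref{eq_forward_acc})--(\ref{eq_backward_acc}) is affine in $s$ and, in each of the three intersection cases of Fig. \ref{fig_discontinuous_acceleration}, not identically zero, so it vanishes only on a measure-zero set; and (iv) the horizontal pieces $\underline{M}\subseteq M(s)$, which by (\ref{eq_under_M}) and \emph{Definition \ref{def_cvb}} are exactly the constant-velocity arcs that are admissible for (\ref{eq_alp_bet}). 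Any constant-velocity contribution coming from (ii), i.e.\ from flat portions of $MVC(s)$ itself, is a fixed quantity that does not depend on $\varepsilon$; hence, up to this $\varepsilon$-independent constant, the cruise length of the generated trajectory equals the path length $s^{\dag}$ of $\underline{M}$.

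Next I would invoke the key observation from the proof of \emph{Theorem \ref{thm_3}}: since $M(s)\equiv\varepsilon$ is a horizontal line while $L(s)$ in (\ref{eq_cvb}) depends only on the path and the dynamics, the set $\{\,s\in[0,s_e]\mid \varepsilon\le L(s)\,\}$ shrinks as $\varepsilon$ increases, so $s^{\dag}$ is non-increasing in $\varepsilon$ and strictly decreasing whenever $\varepsilon$ sweeps across a nontrivial part of $L(s)$, as illustrated in Fig. \ref{fig_proof}. Since $s_e$ is independent of $\varepsilon$, the cruise proportion $s^{\dag}/s_e$ decreases, which is the assertion; in the limit $\varepsilon\ge\mathrm{Max}(MVC)$ one gets $\underline{M}=\emptyset$, so the cruise proportion attains its minimum, consistently with the time-optimal regime of \emph{Theorem \ref{thm_2}}.

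The step I expect to be the main obstacle is the first one, namely ruling out spurious zero-acceleration pieces of the smoothened trajectory outside $\underline{M}$. The delicate cases are flat portions of $MVC_V(s)$ or $MVC_A(s)$ retained as curved switch arcs and the BIO transition arcs; I expect the former to be dispatched by the $\varepsilon$-independence remark above and the latter by the affine-in-$s$ structure of (\ref{eq_forward_acc})--(\ref{eq_backward_acc}). A secondary subtlety is the meaning of ``proportion'': if it is measured with respect to travel time rather than path length, one additionally needs \emph{Theorem \ref{thm_2}} together with a comparison showing that the cruise time $s^{\dag}/\varepsilon$ decreases at least as fast as the total travel time, which again follows from the monotonicity of $s^{\dag}$ and of the admissible region under $MVC^{*}(s)$.
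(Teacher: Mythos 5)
Your proposal is correct and follows essentially the same route as the paper's own proof: identify the cruise portion of the generated trajectory with the switch arcs $\underline{M}$ of (\ref{eq_under_M}), and observe that since $M(s)\equiv\varepsilon$ while $L(s)$ is fixed, the set $\{s\mid \varepsilon\le L(s)\}$, and hence the share of $\underline{M}$ in $MVC^{*}(s)$, shrinks as $\varepsilon$ increases, so the cruise proportion decreases. The extra care you take (ruling out zero-acceleration pieces on the $\alpha$/$\beta$ profiles, the BIO arcs and flat parts of $MVC(s)$, and distinguishing path-length versus travel-time proportion) goes beyond the paper's brief argument but does not change the approach.
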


\begin{proof}
The adjustment of the functional parameter $\varepsilon$ changes the ratio of $\underline{M}$ in $MVC^*(s)$. According to (\ref{eq_under_M}), when the functional parameter $\varepsilon$ increases, the ratio of $\underline{M}$ in $MVC^*(s)$ decreases, as shown in Fig. \ref{fig_proof}. In the `switch search' stage of CNI of the proposed algorithm, the $\underline{M}$ in $MVC^*(s)$ is chosen as switch arcs which constitute constant velocity parts of the resultant trajectory. Therefore, the cruise proportion of trajectories generated by the proposed algorithm is decreasing with the increasing of $\varepsilon$. \QEDA
\end{proof}

\section{Simulation and Experimental Results}

In order to verify the proposed algorithm and related properties, this section provides simulation and experimental results compared with the methods \cite{2000_JRS,2011_JIRS} on a self-developed OWMR platform ``NK-OMNI I'' as shown in Fig. \ref{fig_nkomni}. The OWMR possesses two active casters and one passive caster. Each active caster has two motors which take charge of driving and steering motion of the caster independently. The velocity $\bm v_{max}$ and acceleration $\bm a_{max}$ constraints of active casters are listed in simulation and experimental cases. The detailed kinematic model and path parameterization of the OWMR along a given path can be found in \cite{2017_Mech}, therefore we omit these contents of model and path parameterization. In following simulation and experimental cases, a cubic B\`{e}zier curve is chosen as the given path:
\begin{align}
\bm P(\lambda) = \sum_{i=0}^3 \binom{3}{i} (1-\lambda)^{3-i} \lambda^i \bm P_i, \label{eq_bezier}
\end{align}
where $\binom{3}{i}$ is the binomial coefficient, $\lambda \in [0,1]$ is the path parameter, and $\bm P_i, i\in[0,3]$ is the path control point. The $\bm P(\lambda)$ represents the position of the center point of OWMR in a reference coordinate system. The OWMR orientation is specified as  $\pi s/s_e, s\in[0, s_e]$.

\begin{figure}[t]
      \centering
      \includegraphics[scale=0.6]{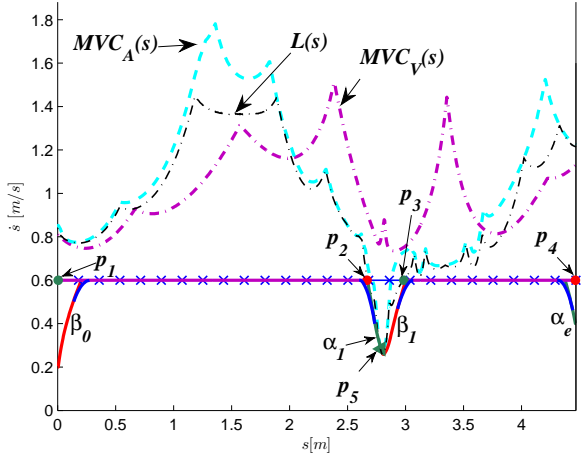}
      \caption{The proposed algorithm generates a feasible trajectory with a high proportion of cruise motion. First, the switch arc (purple solid lines $\overline{p_1p_2},\overline{p_3p_4}$), accelerating (red solid curves $\beta_0,\beta_1$) and decelerating (green solid curves $\alpha_1,\alpha_e$) velocity profiles intersect to constitute a feasible trajectory with discontinuous path acceleration. Then, the discontinuous path acceleration at intersection points is addressed by the proposed BIO to guarantee smoothness, such as those blue solid curves.}
      \label{fig_sim_traj}
\end{figure}

\subsection{Simulation Results}

In this simulation case, each element of velocity constraints $\bm v_{max}$ is set as $18[rad/s]$, and each element of acceleration constraints $\bm a_{max}$ is set as $20[rad/s^2]$. The starting and terminal path velocities are set as $\dot s_0=0.2 [m/s]$ and $\dot s_e=0.4 [m/s]$, respectively. Fig. \ref{fig_sim_traj} shows that when the functional parameter $\varepsilon$ is adjusted to 0.6, the proposed algorithm generates a feasible trajectory with a high proportion of cruise motion under $MVC^*(s)$. The cyan dash curve, purple dash-dot curve and blue solid line with star marker represent $MVC_A(s)$, $MVC_V(s)$ and $M(s)$, respectively. The black dash-dot thin curve represents $L(s)$ in (\ref{eq_cvb}), and it divides $M(s)$ into $\overline{M}$ (the blue solid line with star marker $\overline{p_2p_3}$) and $\underline{M}$ (the purple solid line $\overline{p_1p_2},\overline{p_3p_4}$). First, a feasible trajectory with discontinuous path acceleration is obtained by intersecting the switch arcs $\underline{M}$, accelerating $(\beta_0,\beta_1)$ and decelerating $(\alpha_1,\alpha_e)$ velocity profiles. Accordingly, for each intersection point, the proposed BIO is conducted to guarantee continuous path acceleration. For example, Fig. \ref{fig_sim_traj_detail} shows the enlarged view of the region around intersection points among $\underline{M}$, $\alpha_1$ and $\beta_1$. The blue solid curves derived from the BIO successfully connect $\underline{M}$, $\alpha_1$ and $\beta_1$, and ensure continuous path acceleration. It verifies \emph{Theorem \ref{thm_1}} and \emph{Theorem \ref{thm_1_1}}. For simplicity, the resultant trajectories generated by the proposed algorithm are expressed as orange solid curves (not distinguishing acceleration and deceleration) in following simulation and experimental cases.

\begin{figure}[t]
      \centering
      \includegraphics[scale=0.6]{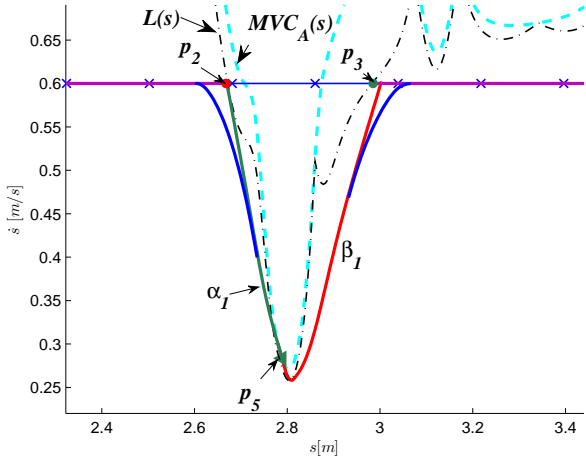}
      \caption{The enlarged view of the region around intersection points in Fig. \ref{fig_sim_traj}.}
      \label{fig_sim_traj_detail}
\end{figure}

\begin{figure}[t]
      \centering
      \includegraphics[scale=0.6]{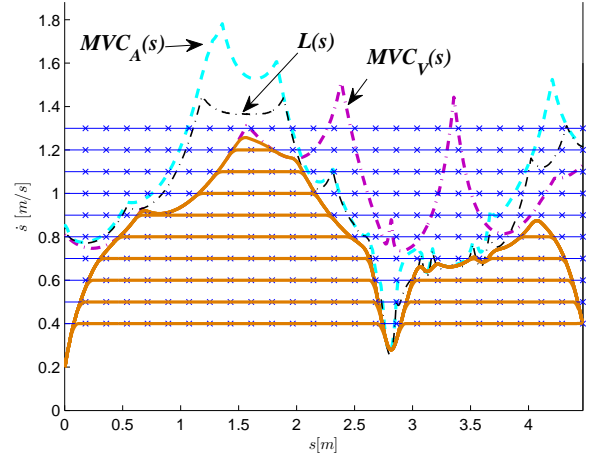}
      \caption{With the increasing of $\varepsilon$, the cruise proportion and traveling time of resultant trajectories (orange curves) generated by the proposed algorithm decrease.}
      \label{fig_sim_multitraj}
\end{figure}

\begin{table}[!htb]
  \centering
  \caption{Performance of the proposed algorithm with increasing $\varepsilon$}
  \label{tab_1}
  \begin{tabular}{|c|c|c|c|}
    \hline
    $\varepsilon$         &  Trav. Time [s]     &   Comp. Time [ms]   & Cruise Prop. \\ \hline
   \ 0.4 \                  &  11.36              &   1                 & 94\%    \\
    0.5                   &  9.28               &   3                 & 90\%     \\
    0.6                   &  7.97               &   6                 & 85\%    \\
    0.7                   &  7.12               &   16                & 67\%     \\
    0.8                   &  6.63               &   19                & 50\%    \\
    0.9                   &  6.37               &   26                & 36\%     \\
    1.0                   &  6.23               &   31                & 22\%    \\
    1.1                   &  6.15               &   37                & 16\%     \\
    1.2                   &  6.11               &   44                & 7\%     \\
    1.3                   &  6.10               &   45                & 0\%     \\
    \hline
  \end{tabular}
\end{table}

\begin{figure}[t]
      \centering
      \includegraphics[scale=0.6]{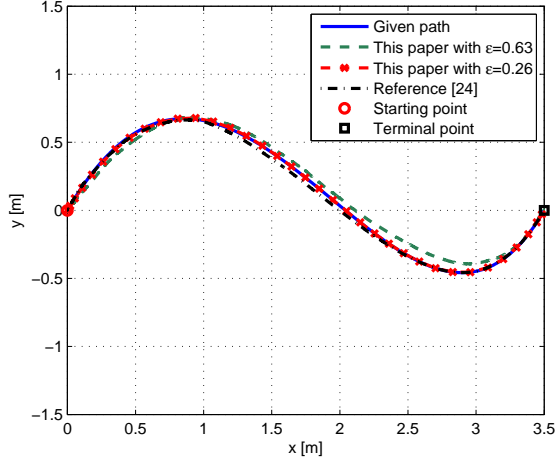}
      \caption{The blue solid curve represents a given path from the starting point to the terminal point. The black dash-dot curve represents the real path of following the trajectory obtained by \cite{2000_JRS}. The green dash curve and red dash curve with star marker represent real paths of following trajectories obtained by the proposed algorithm with $\varepsilon=0.63$ and $\varepsilon=0.26$, respectively.}
      \label{fig_exp_path}
\end{figure}

\begin{figure}[t]
      \centering
      \includegraphics[scale=0.6]{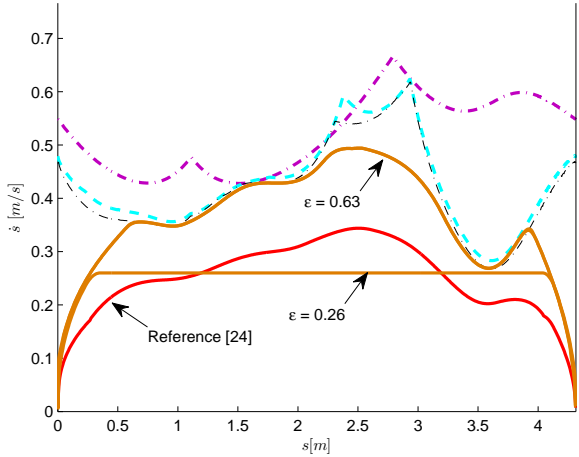}
      \caption{The orange solid curves are feasible trajectories obtained by the proposed algorithm with $\varepsilon=0.63$ and $\varepsilon=0.26$, respectively. The red solid curve is a piecewise polynomial feasible trajectory obtained by the method \cite{2000_JRS}.}
      \label{fig_alg_traj}
\end{figure}

\begin{table}[!htb]
  \centering
  \caption{Comparative results on the computational time [\textsc{ms}]}
  \label{tab_2}
  \begin{tabular}{|c|c|c|}
    \hline
    Normalized parameter         &  This paper     &   Reference \cite{2011_JIRS}  \\ \hline
   \ 0 \                  &  1                &   221                     \\
    0.1                   &  5               &   245                      \\
    0.2                   &  9               &   239                     \\
    0.3                   &  21               &   247                    \\
    0.4                   &  22               &   232                  \\
    0.5                   &  24               &   201                    \\
    0.6                   &  36               &   242                    \\
    0.7                   &  42               &   246                     \\
    0.8                   &  45               &   233                    \\
    0.9                   &  48               &   251                     \\
    1.0                   &  59               &   233                     \\
    \hline
  \end{tabular}
\end{table}

The adjustment of the functional parameter $\varepsilon$ affects the traveling time, computational time and cruise proportion of trajectories generated by the proposed algorithm. Fig. \ref{fig_sim_multitraj} shows that when $\varepsilon$ increases from 0.4 to 1.3, the blue solid line with star marker $M(s)$ moves up. Accordingly, the traveling time and cruise proportion of trajectories (orange solid curves) generated by the proposed algorithm decrease. In addition, Tab. \ref{tab_1} lists the detailed information of the traveling time, computational time and cruise proportion for each $\varepsilon$, which verifies that \emph{Theorem \ref{thm_2}-\ref{thm_4}} of the proposed algorithm hold.

\subsection{Comparative Experimental Results}

\begin{figure}[t]
      \centering
      \includegraphics[scale=0.6]{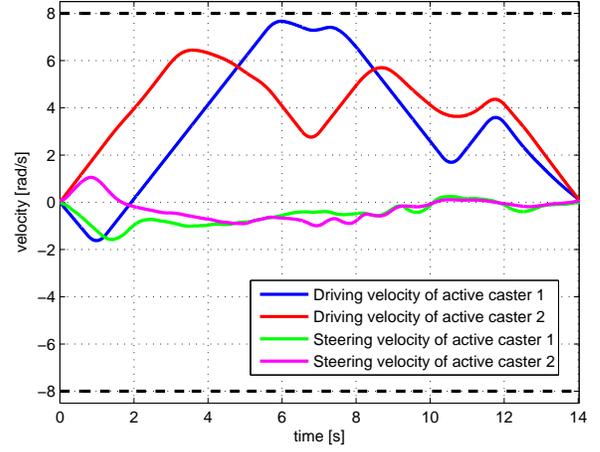}
      \caption{The driving and steering velocities of active casters of the OWMR following the feasible trajectory obtained by the proposed algorithm with $\varepsilon=0.63$.}
      \label{fig_vel_2}
\end{figure}

\begin{figure}[t]
      \centering
      \includegraphics[scale=0.6]{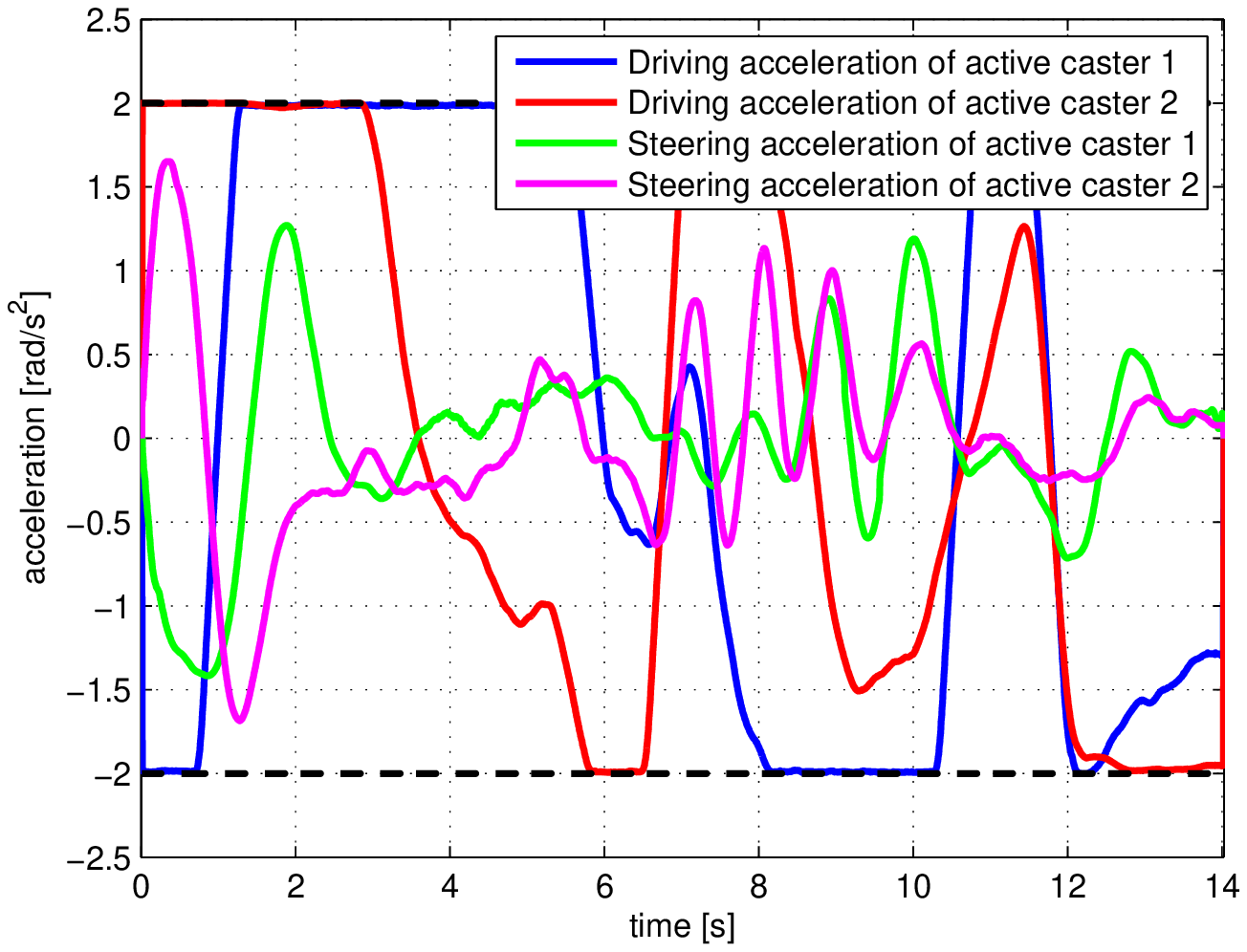}
      \caption{The driving and steering accelerations of active casters of the OWMR following the feasible trajectory obtained by the proposed algorithm with $\varepsilon=0.63$.}
      \label{fig_acc_2}
\end{figure}

\begin{figure}[t]
      \centering
      \includegraphics[scale=0.6]{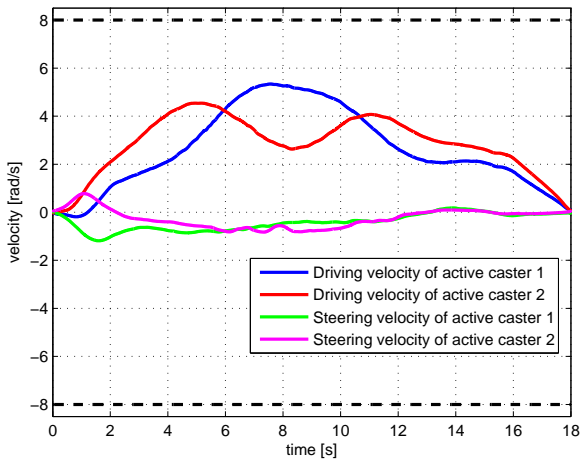}
      \caption{The driving and steering velocities of active casters of the OWMR following the feasible trajectory \cite{2000_JRS}.}
      \label{fig_vel_3}
\end{figure}

\begin{figure}[t]
      \centering
      \includegraphics[scale=0.6]{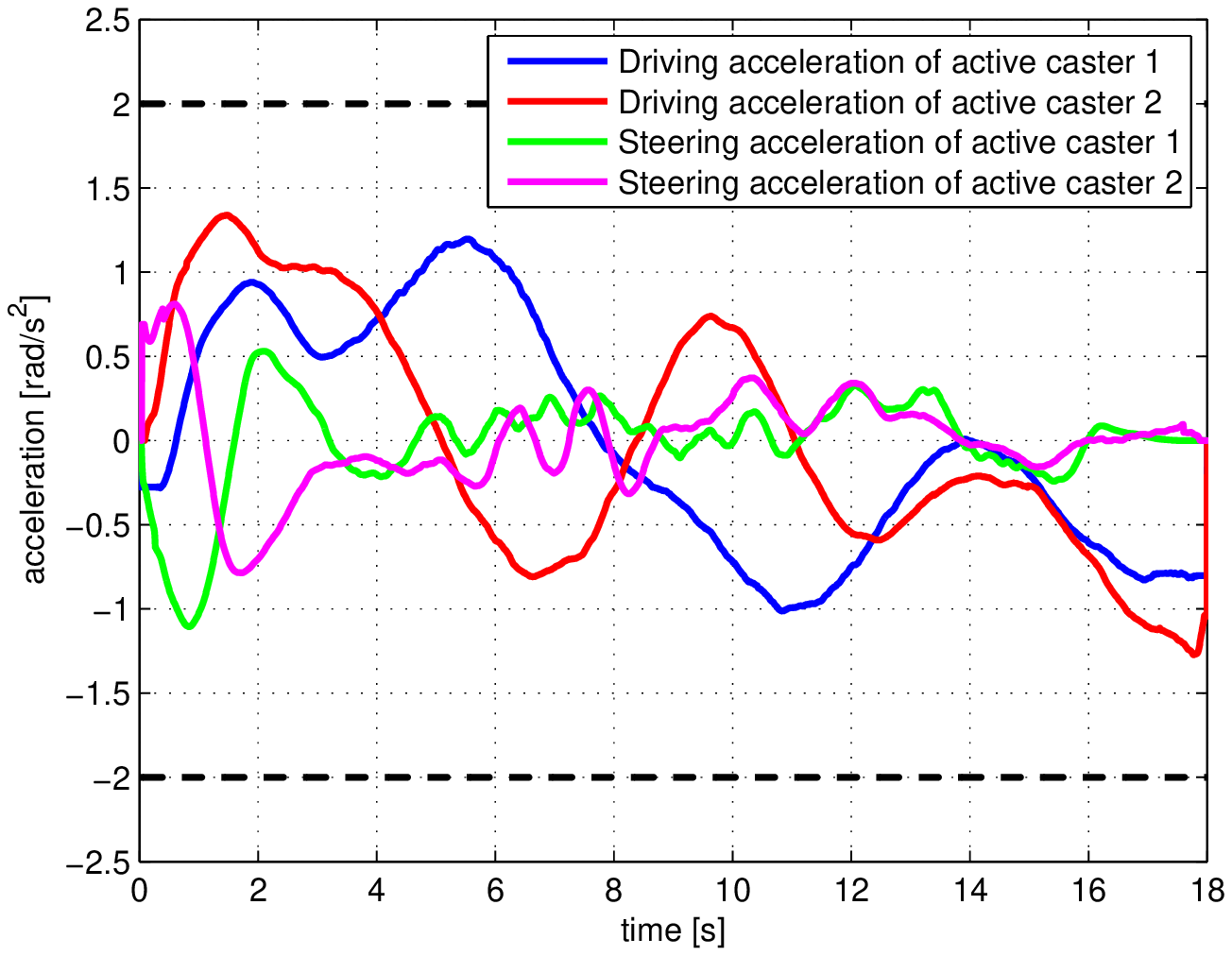}
      \caption{The driving and steering accelerations of active casters of the OWMR following the feasible trajectory \cite{2000_JRS}.}
      \label{fig_acc_3}
\end{figure}

In order to show the real-time performance of the proposed algorithm, and the advantages of the tradability mechanism between time-optimal motion and cruise motion in the proposed algorithm, this subsection provides comparative experimental results with existing methods \cite{2000_JRS} and \cite{2011_JIRS}.

Both works in \cite{2000_JRS,2011_JIRS} transform a path-constrained trajectory planning problem into a nonlinear optimization problem, and then solve the problem by numerical optimization techniques (such as FTM or SQP in $\mathrm{MatLab}^{\mathrm{TM}}$). In the nonlinear optimization problem, the trajectory of robotic systems is expressed as cubic or fifth-order polynomials. In order to output smooth trajectories, the jerk of trajectories is bounded and described as inequality constraints in the optimization problem. For objective functions in the optimization problem, the work in \cite{2000_JRS} selects the traveling time of trajectories, while the work in \cite{2011_JIRS} selects the weighted sum of the traveling time and the integral of the squared jerk. By adjusting the weights of two terms in the objective function, the method \cite{2011_JIRS} can output more smooth but slower trajectories, or faster but less smooth trajectories. In following experimental cases, the comparative results with \cite{2000_JRS} show the advantages of the tradability mechanism between cruise and time-optimal motions in the proposed algorithm, while the comparison results with \cite{2011_JIRS} validates the real-time performance of the proposed algorithm.

\emph{Experimental Verification of Time-optimal Motion with a Large $\epsilon$.} The given path of OWMR is set as a blue cubic B\`{e}zier curve in Fig. \ref{fig_exp_path}. Each element of velocity constraints $\bm v_{max}$ is set as $8[rad/s]$, and each element of acceleration constraints $\bm a_{max}$ is set as $2[rad/s^2]$. Both starting and terminal path velocities are set as zero. According to (\ref{eq_mvc_star}), the curve $MVC(s)$ is equal to $MVC^*(s)$ when the functional parameter $\varepsilon$ is set as $\mathrm{Max}(MVC)=0.63$. As shown in Fig. \ref{fig_alg_traj}, the proposed algorithm and the method \cite{2000_JRS} generate smooth and feasible trajectories, respectively. Moreover, the path velocity of the feasible trajectory obtained by the proposed algorithm is greater than or equal to the method \cite{2000_JRS} at each path coordinate. With the aid of a simple PID controller, the OWMR tracks these feasible trajectories. Fig. \ref{fig_vel_2}-\ref{fig_acc_3} show the velocity and acceleration of active casters of the OWMR following the feasible trajectories, which also verify the time optimality of the proposed algorithm.

\begin{figure}[t]
      \centering
      \includegraphics[scale=0.6]{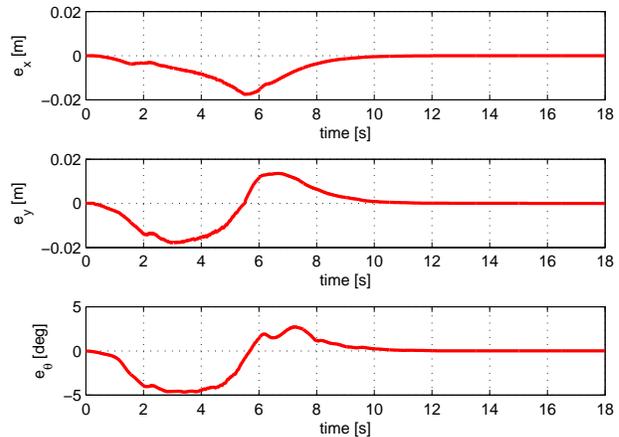}
      \caption{The tracking error for the position and orientation of the OWMR following the feasible trajectory obtained by the proposed algorithm with $\varepsilon=0.26$.}
      \label{fig_error_4}
\end{figure}

\begin{figure}[t]
      \centering
      \includegraphics[scale=0.6]{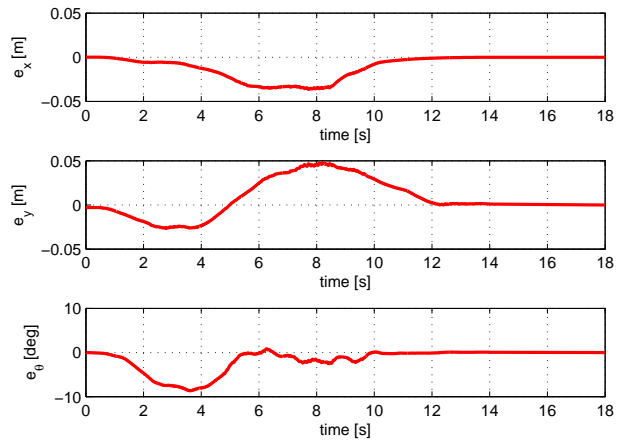}
      \caption{The tracking error for the position and orientation of the OWMR following the feasible trajectory \cite{2000_JRS}.}
      \label{fig_error_3}
\end{figure}

\emph{Experiment Verification of Cruise Motion with a Small $\epsilon$.} When the functional parameter $\varepsilon$ decreases to 0.26, the proposed algorithm generates another feasible trajectory of which the traveling time is the same with the trajectory \cite{2000_JRS}. As shown in Fig. \ref{fig_alg_traj}, the feasible trajectory with $\varepsilon=0.26$ possesses a high proportion of cruise motion, which improves tracking accuracy and makes robots move steadily. Fig. \ref{fig_error_4} and Fig. \ref{fig_error_3} show that the tracking accuracy of the position and orientation of the OWMR following the feasible trajectory obtained by the proposed algorithm with $\varepsilon=0.26$ is better than the feasible trajectory obtained by the method \cite{2000_JRS}. In addition, with respect to \cite{2000_JRS}, the velocity and acceleration of active casters of the OWMR following the feasible trajectory obtained by the proposed algorithm are more smooth as shown in Fig. \ref{fig_vel_4} and Fig. \ref{fig_acc_4}. It verifies that the proposed algorithm can output a feasible trajectory possessing a high proportion of cruise motion, which improve the tracking accuracy and make robots move steadily.

\begin{figure}[t]
      \centering
      \includegraphics[scale=0.6]{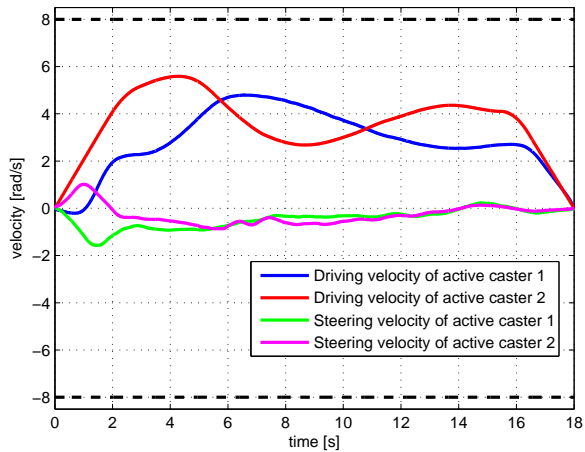}
      \caption{The driving and steering velocities of active casters of the OWMR following the feasible trajectory obtained by the proposed algorithm with $\varepsilon=0.26$.}
      \label{fig_vel_4}
\end{figure}

\begin{figure}[t]
      \centering
      \includegraphics[scale=0.6]{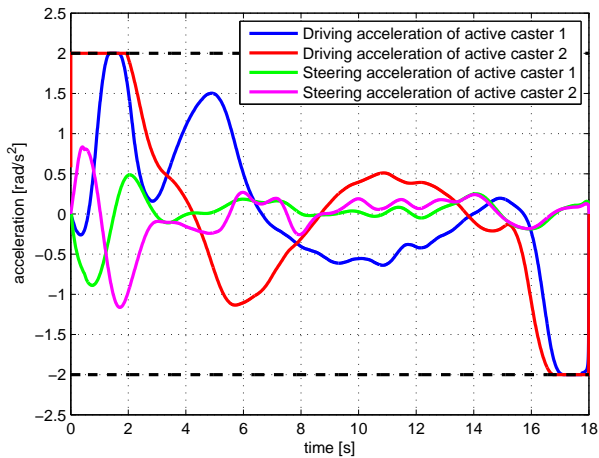}
      \caption{The driving and steering accelerations of active casters of the OWMR following the feasible trajectory obtained by the proposed algorithm with $\varepsilon=0.26$.}
      \label{fig_acc_4}
\end{figure}

\emph{Experiment Verification of the Real-time Performance.} Although the method \cite{2011_JIRS} and the proposed algorithm possess own adjustable mechanisms, the computational time of the proposed algorithm is also adjustable and it can achieve real-time performance. Tab. \ref{tab_2} shows comparative results on the computational time between the method \cite{2011_JIRS} and the proposed algorithm. For consistency, the weight parameter in \cite{2011_JIRS} and the functional parameter in the proposed algorithm are normalized as `0 to 1'. Tab. \ref{tab_2} shows that the trajectory generation of the proposed algorithm is much faster than the method \cite{2011_JIRS}, as well as the computational time of the proposed algorithm is in proportional to $\varepsilon$. In addition, compared with \cite{2011_JIRS}, the proposed approach can achieve cruise motion and adjust the cruise motion speed explicitly in the planning module.

\section{Conclusion}

We have proposed a real-time and acceleration-continuous trajectory planning algorithm along given paths. After smoothing time-optimal trajectories, it is proven that the proposed algorithm still preserves the completeness property. In addition, the proposed algorithm possesses a built-in tradability mechanism which flexibly changes the traveling time, cruise proportion and computational time of the resultant trajectory by adjusting a user-specified functional parameter. We have also provided the detailed proofs of the tradability mechanism in terms of the traveling time, cruise proportion and computational time. Simulation and comparative experimental results on omnidirectional wheeled mobile robots demonstrate the superior performance of the proposed algorithm.


\ifCLASSOPTIONcaptionsoff
  \newpage
\fi

\end{document}